\newcommand\numberthis{\addtocounter{equation}{1}\tag{\theequation}} 
  \def\cref#1{Theorem~\ref{#1}}%
\newcommand{\ep}{\varepsilon}
\newcommand{\set}{\calX_0}
\newcommand{\cset}{\calX}
\newcommand{\mset}{\calX^\text{max}}
\newcommand{\bigO}[1]{O\left(  #1 \right)}
\newcommand{\bigtO}[1]{\tilde O\left( #1 \right)}
\newcommand{\expval}[1]{\EE\left[ #1  \right]}
\newcommand{\regret}{\text{Regret}}
\newcommand{\dynregret}{\text{DynRegret}}
\newcommandx{\dynregretu}[1][1=T]{\text{DynRegret}_{#1}(u_{1:#1})}
\newcommand{\ccv}{\text{CCV}}
\newcommand{\kl}{D_{\text{KL}}}
\newcommandx{\err}[2][2=T]{\calE_{ #2}(#1)}  
\newcommandx{\berr}[2][2=T]{\bar \calE_{ #2}(#1)}
\newcommandx{\ierr}[2][2=t]{\ep_{ #2}(#1)}
\newcommandx{\gsum}[2][2=t]{\calH_{ #2}(#1)}
\newcommandx{\lib}[2][2=t]{L^{ #1}_{ #2}}
\newcommandx{\plib}[2][2=t]{\hat L^{ #1}_{ #2}}
\newcommand{\pred}[2]{\nabla \hat {{#1}}_{#2}}  
\newcommand{\grad}[2]{\nabla {#1}_{#2}}  
\newcommand{\pistar}{\pi^\star}
\newcommand{\xstar}{x^\star}
\newcommand{\gplus}{g^+}
\newcommandx{\gradgplus}[1][1= ]{\nabla g^+_{#1}}
\newcommandx{\gradhg}[1][1= ]{\nabla \hat g_{#1}}
\newcommandx{\gradhgplus}[1][1= ]{\nabla \hat g^+_{#1}}
\newcommandx{\gradhgdag}[1][1= ]{\nabla \hat g^\dagger_{#1}}
\newcommand{\hell}{\hat \ell}
\newcommand{\hc}{\hat c}
\newcommand{\hl}{\hat l}
\newcommand{\tl}{\tilde l}
\newcommand{\tx}{\tilde x}
\newcommand{\ind}[1]{\mathbbm{1}\left\{ #1 \right\}}
\newcommand{\bone}{\boldsymbol{1}}
\newcommand{\taup}{{\tau+1}}
\newcommand{\Tp}{{T+1}}
\newcommand{\Tm}{{T-1}}
\newcommand{\tm}{{t-1}}
\newcommand{\tp}{{t+1}}
\newcommand{\diam}{D}
\newcommand{\innerp}[2]{\langle #1,\ #2 \rangle}
\newcommandx{\innerpt}[3][3=t]{{\langle #1,\ #2 \rangle}_{#3}}
\newcommand{\parN}{{(N)}} 
\newcommand{\pari}{{(i)}} 
\newcommand{\norm}[1]{|| #1 ||}
\newcommand{\enorm}[1]{|| #1 ||_2}
\newcommand{\dnorm}[1]{|| #1 ||_\star}
\newcommand{\inorm}[1]{|| #1 ||_\infty}
\begin{document}
\title{An Optimistic Algorithm for Online Convex Optimization with Adversarial Constraints}

\author{\name Jordan Lekeufack \email jordan.lekeufack@berkeley.edu \\
       \addr Department of Statistics\\
       University of California, Berkeley\\
       \AND
       \name Michael I. Jordan \email jordan@cs.berkeley.edu \\
       \addr Department of Statistics / Department of Electrical Engineering and Computer Science\\
       University of California, Berkeley\\
       }

\maketitle

\begin{abstract}
    We study Online Convex Optimization (OCO) with adversarial constraints, where an online algorithm must make sequential decisions to minimize both convex loss functions and cumulative constraint violations. We focus on a setting where the algorithm has access to predictions of the loss and constraint functions. Our results show that we can improve the current best bounds of $ O(\sqrt{T}) $ regret and $ \tilde{O}(\sqrt{T}) $ cumulative constraint violations to $ O(\sqrt{\err{f}}) $ and $ \tilde{O}(\sqrt{\err{\gplus}}) $, respectively, where $ \err{f} $ and $\err{\gplus}$ represent the cumulative prediction errors of the loss and constraint functions. In the worst case, where $\err{f} = O(T) $ and $ \err{\gplus} = O(T) $ (assuming bounded gradients of the loss and constraint functions), our rates match the prior $ O(\sqrt{T}) $ results. However, when the loss and constraint predictions are accurate, our approach yields significantly smaller regret and cumulative constraint violations. Finally, we apply this to the setting of adversarial contextual bandits with sequential risk constraints, obtaining optimistic bounds $O (\sqrt{\err{f}} T^{1/3})$ regret and  $O(\sqrt{\err{g^+}} T^{1/3})$ constraints violation, yielding better performance than existing results when prediction quality is sufficiently high.
\end{abstract}

\section{Introduction}

We are interested in generalizations of Online Convex Optimization (OCO) to problems in which constraints are imposed but can be violated ---generalizations which are referred to as Constrained Online Convex Optimization (COCO). Recall the standard formulation of OCO~\citep{orabona2019modern, hazan2023introduction}: At each round $t$, a learner makes a decision $x_t\in \cset$, receives a convex loss function $f_t$ from the environment, and suffers the loss $f_t(x_t)$. The goal of the learner is to minimize the cumulative loss $\sum_{t=1}^T f_t(x_t)$. The COCO framework imposes an additional requirement on the learner: meeting a potentially adversarial convex constraint of the form $g_t(x_t)\leq 0$ at every time step. The learner observes $g_t$ only after selecting $x_t$, and cannot always satisfy the constraint exactly but can hope to have a small cumulative constraint violation $\sum_{t=1}^T \max(g_t(x_t), 0)$. In the adversarial setting, it is not viable to seek absolute minima of the cumulative loss, and the problem is generally formulated in terms of obtaining a sublinear \textit{Static Regret}---the difference between the learner's cumulative loss and the cumulative loss of a fixed oracle/decision. Having a sublinear regret means that, on average, we perform as well as the best action in hindsight. A stronger and more general objective is the \textit{Dynamic Regret} where learner's performance is benchmarked against sequences of decisions, not just fixed actions. In the COCO problem, we also aim to ensure a sublinear cumulative constraint violation. 


One subcategory of OCO problems is \textit{adversarial contextual bandits} (\citealp{auer2002nonstochastic};\linebreak \citealp{beygelzimer2011contextual}). In that setting, the learner receives contextual information from the environment, then she selects one action among $K$ available, and only observes the loss of the chosen action. The learners aims to minimize its cumulative loss. 
\citet{sun2017safety} introduced \textit{sequential risk constraints in contextual bandit}, where, in addition to the loss for each action, the environment generate a risk for each action. In addition to minimizing the cumulative loss, the learner wants to keep the average cumulative risk bounded by a predefined safety threshold.  

Recent work in OCO has considered settings in which the adversary is \emph{predictable}---i.e., not entirely arbitrary---aiming to obtain improved regret bounds \citep{chiang2012online, rakhlin2013online, rakhlin2013optimization, mohri2016accelerating, joulani2017modular}.  They showed that the regret improved from $O(\sqrt{T})$ to $O(\sqrt{\err{f}})$ where $\err{f}$ is a measure of the cumulative prediction error. The optimistic framework has also been studied in the COCO setting by \citet{qiu2023gradient}, who focused on \textit{time-invariant constraints},  ($\forall t, g_t := g)$ and the time varying constraints was pursued in \linebreak \citet{anderson2022lazy}, who established bounds for specific cases (e.g perfect loss predictions, linear constraints).

In the current paper we go beyond earlier work to consider the case of adversarial constraints. Our main contribution is the following: \textit{We present the first algorithm to solve COCO problems in which the constraints are adversarial but also predictable, achieving $O(\sqrt{\err{f}})$ regret and  $\tilde O(\sqrt{\err{g^+}})$ constraint violation in the general convex case}.  More precisely:
\begin{enumerate}
    \item We present a meta-algorithm that, when built on an optimistic OCO algorithm, achieves $O(\sqrt{\err{f}})$ regret and  $\tilde O(\sqrt{\err{g^+}})$ constraint violation who matches the best COCO algorithm by \citet{sinha2024optimal} in the worst case. 
    \item Our algorithm is computationally efficient as it relies only on a projection on the simpler set $\cset$ at each time step, instead of convex optimization steps.
    \item Furthermore, the same meta algorithm can be used to prove dynamic regret guarantees\linebreak $\tilde O(\sqrt{P_T\err{f}})$ with similar constraint violation guarantees $\tilde O(\sqrt{P_T\err{g^+}})$. 
    \item Finally, we show that our method can be used to solve the adversarial contextual bandits problem with sequential risk constraints, providing a $O(\sqrt{\err{f}} T^{1/3})$ regret and \linebreak$O(\sqrt{\err{g^+}} T^{1/3})$ constraint violation. 
\end{enumerate}
Our theoretical framework exploits state-of-the-art methods from both optimistic OCO and constrained OCO.

The rest of the paper is structured as follows: We present previous work in \cref{sec:related_work}, introduce the main assumptions and notations in \cref{sec:preliminaries} and present the meta-algorithm for the COCO problem in \cref{sec:opt_coco}. We then present how the meta-algorithm gives static regret guarantees in \cref{sec:static}, dynamic regret guarantees in \cref{sec:dynamic} and how its application to the experts setting in \cref{sec:experts} and the contextual bandits in \cref{sec:bandits}.

\renewcommand{\arraystretch}{1.5}
\begin{table}[t]
    \hskip-1.2cm
    \resizebox{1.2\textwidth}{!}{%
    \begin{tabular}{|c|c|c|c|c|c|}
    \hline
        \textbf{Reference} & \textbf{Complexity per round} & \textbf{Constraints}  & \textbf{Loss Function} & \textbf{Regret} & \textbf{Violation}\\
        \hline
         \multirow{3}{*}{\makecell{\citet{guo2022online}}} & \multirow{3}{*}{Conv-OPT} & Fixed  & Convex & $O(\sqrt{T})$ & $O(1)$\\
         & & Adversarial & Convex & $O(\sqrt{T})$ & $O(T^{3/4})$ \\
         & & Adversarial & Convex & \textbf{(D)} $O(P_T\sqrt{T}) $ & $O(T^{3/4})$ \\
         \hline 
         \makecell{\citet{yi2023distributed}} & Conv-OPT & Adversarial & Convex & $O(T^{\max(c, 1-c)})$ & $O(T^{1-c/2})$ \\
          \hline
          \makecell{\citet{sinha2024optimal}} & Projection & Adversarial & Convex & $\bigO{\sqrt{T}}$ & $O(\sqrt{T}\log T)$ \\
          \hline
          \makecell{\citet{qiu2023gradient}} & Projection & Fixed & Convex, Slater & $O(\sqrt{V_T(f)})$ & $O(1)$\\
          \hline
          \makecell{\citet{anderson2022lazy}} & Projection & Adversarial & \makecell{Convex, \\ Perfect predictions} & $O(1)$ & $O(\sqrt{T})$ \\
          \hline
          \makecell{\citet{muthirayan2022online}} & Conv-OPT & Known & Convex & $\bigO{\sqrt{D_T(f)}}$ & $O(\sqrt{T})$\\
          \hline
          \makecell{\citet{sun2017safety}} & Projection & \multicolumn{2}{|c|}{Contextual Bandits} &  $\bigO{\sqrt{T}}$ & $\bigO{T^{3/4}}$ \\
          \hline
         \multirow{3}{*}{\textbf{Ours}} & \multirow{3}{*}{Projection} & Adversarial & Convex & $O(\sqrt{\err{f}})$ & $O(\sqrt{\err{\gplus}}\log T)$ \\
         & & Adversarial & Convex & \textbf{(D)} $O(\sqrt{P_T\err{f}})$ & $O(\sqrt{P_T\err{\gplus}}\log T)$ \\
         \cline{3-6}
         & & \multicolumn{2}{|c|}{Contextual Bandits} & $\bigtO{\sqrt{\err{f}} T^{1/3}}$ & $\bigtO{\sqrt{\err{g^+}} T^{1/3}}$ \\
         \hline
    \end{tabular}}
    \caption{Comparison with the most recent Constrained OCO work. $c\in[0,1]$ is a parameter of the algorithm. "Conv-OPT" refers to algorithms that perform constrained convex optimization at every round. $\err{f}$ and $\err{g^+}$ are measures of the prediction error. $V_T(f) = \sum_{t=2}^T \sup_x \dnorm{\nabla f_t(x) - \nabla f_\tm(x)}^2$. Note that when the prediction is the previous loss, $\err{f} \leq V_T(f)$. $D_T(f) := \sum_{t=1}^T \dnorm{\grad{f}{t}(x_t) - M_t}^2$ where $M_t$ is a guess of $\grad{f}{t}(x_t)$. Since $x_t$ is unknown when constructing $M_t$, bounding in terms of $\err{f}$ provides better and more general results than using $D_T(f)$. For linear $\hat f$, these quantities are equal: $\err{f} = D_T(f)$. \textbf{(D)} refers to a dynamic regret guarantee, with $P_T = \sum_{t=1}^{T-1} \norm{u_\tp - u_t}$ the path length of the feasible comparator sequence. For contextual bandits, $K$ is the number of actions and $M$ the number of experts. Note that the criteria for constraint violation in \citet{sun2017safety} is strictly weaker than ours.} 
    \label{table:prev_work}

\end{table}

\section{Related Work}
\label{sec:related_work}

\paragraph{Unconstrained OCO}
The OCO problem was introduced by \citet{zinkevich2003online},  who established a $O(\sqrt{T})$ static regret and $O(\sqrt{T} (1+P_T))$ dynamic regret guarantees based on projected online gradient descent (OGD), where $P_T$ is the path-length of the comparator sequence. \citet{hazan2023introduction, orabona2019modern}  provide overviews of the burgeoning literature that has emerged since Zinkevich's seminal work, in particular focusing on online mirror descent (OMD) as a general way to solve OCO problems. \citet{zhang2018adaptive} later improved the dynamic regret bound to $O(\sqrt{T(1+P_T)})$.

\paragraph{Optimistic OCO}
Optimistic OCO is often formulated as a problem involving \textit{gradual variation}---i.e., where $\grad{f}{t}$ and $\grad{f}{t-1}$ are close in some appropriate metric. \citet{chiang2012online} exploit this assumption in an optimistic version of OMD that incorporates a prediction based on the most recent gradient, and establish a regret bound of $O(\sqrt{V_T})$ where $V_T = \sum_{t=2}^T \sup_{x\in\calX}\dnorm{\grad{f}{t}(x) - \grad{f}{t-1}(x)}^2$. Previous works \citep{rakhlin2013online, rakhlin2013optimization, steinhardt2014adaptivity, mohri2016accelerating, joulani2017modular, bhaskara2020online} prove that when using a predictor $M_t$ that is not necessarily the past gradient, one can have regret of the form $O\left(\sqrt{D_T}\right)$ where $D_T := \sum_{t=1}^T \dnorm{\grad{f}{t}(x_t) - M_t}^2$. 
The dynamic regret case has been studied intensively \citep{jadbabaie2015online, scroccaro2023adaptive} with the best bound \citep{Zhao2020DynamicRO, zhao2024adaptivity}, being $O(\sqrt{(1 + P_T + V_T)(1+P_T)})$.

\paragraph{Constrained OCO}
Constrained OCO was first studied in the context of \textit{time-invariant constraints}; i.e., where $g_t := g$ for all $t$. In this setup one can employ projection-free algorithms, avoiding the potentially costly projection onto the set $\cset = \{x\in\set, g(x)\leq 0\}$, by allowing the learner to violate the constraints in a controlled way~\citep{mahdavi2012trading, jenatton2016adaptive, yu2020low}. The case of \textit{time-varying constraints} is significantly harder as the constraints $g_t$ are potentially adversarial. Most of the early work studying such constraints~\citep{neely2017onlineconvexoptimizationtimevarying, yi2023distributed} accordingly incorporated an additional Slater condition: $\exists \check{x}\in\calX, \nu > 0,  \forall t,\; g_t(\check{x}) \leq -\nu$. These papers establish regret guarantees that grow with $\nu^{-1}$, which unfortunately can be vacuous as $\nu$ can be arbitrarily small. \citet{hutchinson2024safe} studied the setting with time-varying constraint but such that the constraints sets ($\calX_t := \{ x\in\set, g_t(x) \leq 0\}$) are monotone, i.e $\calX_0  \subseteq \calX_1 \subseteq \dots \subseteq \calX_T$ and established a $O(\sqrt{P_T T}$ dynamic regret when $P_T$ is known beforehand. \citet{guo2022online} presented an algorithm that does not require the Slater condition and yielded improved bounds, achieving a $O(\sqrt{T})$ static regret, $O(P_T\sqrt{T})$ dynamic regret and $O(T^{3/4})$ constraint violations, for unknown $P_T$ . However, it requires solving a convex optimization problem at each time step. In a more recent work, \citet{sinha2024optimal} presented a simple and efficient algorithm to solve the problem with just a projection and obtained state-of-the-art guarantees: $O(\sqrt{T})$ regret and $O(\sqrt{T}\log(T))$ constraint violations. 
See \cref{table:prev_work} for more comparison of our results with previous work.

\paragraph{Optimistic COCO} \citet{qiu2023gradient} studied the case with gradual variations and time-invariant constraints, proving a $O(\sqrt{V_T})$ regret guarantee and a $O(1)$ constraint violations. \citet{muthirayan2022online} tackled the time-varying but \textit{known} constraints with predictions, proving a regret guarantee of $O(\sqrt{D_T})$ and cumulative constraint violation of $O(\sqrt{T})$. Under perfect loss predictions, \cite{anderson2022lazy}  demonstrated a $O(1)$ bound on regret and $O(\sqrt{T})$ bound on constraint violation. We also add these results in \cref{table:prev_work} for comparison.

\paragraph{Adversarial Contextual Bandits} The adversarial contextual bandit problem was first introduced by \citet{auer2002nonstochastic}, who proposed EXP4, achieving optimal $O(\sqrt{T})$ expected regret. \citet{Wei2020TakingAH} later advanced the field by incorporating predictions, achieving $O(\sqrt{\err{f}}T^{1/4})$ regret when $\err{f}$ is known beforehand - an improvement over EXP4 when $\err{f} = o(\sqrt{T})$. For unknown $\err{f}$, they developed an algorithm with $O(\sqrt{\err{f}}T^{1/3})$ expected regret. \citet{sun2017safety} extended this to include sequential risk constraints (analogous to constrained OCO), developing a modified EXP4 that achieves $O(\sqrt{T})$ regret with $O(\sqrt{T^{3/4}})$ total risk violation.
\section{Preliminaries}
\label{sec:preliminaries}

\subsection{Problem setup and notation}
Let $\RR$ denote the set of real numbers, and let $\RR^d$ denote the set of $d$-dimensional real vectors. Let $\set\subseteq \RR^d$ denote the set of possible actions of the learner, where $x \in \set$ is a specific action, and let $\norm{\cdot}$ be a norm defined on $\set$. Let the dual norm be denoted as $\dnorm{\theta} := \max_{x, \norm{x}=1} \innerp{\theta}{x}$.

Online learning is a problem formulation in which the learner plays the following game with Nature. At each step $t$:
\begin{enumerate}
    \item The learner plays action $x_t \subseteq \set$.
    \item Nature reveals a loss function $f_t: \set \to \RR$ and a constraint function $g_t: \set \to \RR$.\footnote{If we have multiple constraint functions $\bg_{t,k}$, we set $g_t := \max_k \bg_{t,k}$.}
    \item The learner suffers the loss $f_t(x_t)$ and the constraint violation $g_t(x_t)$.
\end{enumerate}

In standard OCO, the loss function $f_t$ is convex, and the goal of the learner is to minimize the regret with respect to an oracle action $u$, where:
\begin{equation}
    \label{eq:def_regret}
    \regret_T(u) := \sum_{t=1}^T f_t(x_t) - f_t(u).
\end{equation}

In COCO, we generalize the OCO problem to additionally ask the learner to obtain a small cumulative constraint violation, which we denote as $\ccv_T$:
\begin{equation}
    \label{eq:def_ccv}
    \ccv_T := \sum_{t=1}^T g_t^+(x_t) \quad \text{where}\quad g_t^+(x) := \max\{0, g_t(x)\}.
\end{equation}

Overall, the goal of the learner is to achieve both sublinear regret, wrt to any action $u$ in the \textit{oracle set}, and sublinear CCV. This is a challenging problem, and indeed \citet{mannor2009online} proved that no algorithm can achieve both sublinear regret and sublinear cumulative constraint violation for the oracle set $\mset := \{ x \in \set, \sum_{t=1}^T g_t(x) \leq 0\}$. However, it is possible to find algorithms that achieve sublinear regret for the smaller set $\cset := \{ x \in \set, g_t(x) \leq 0,\; \forall t\in [T]\}$, and this latter problem is our focus.

In addition, we assume that at the end of step $t$, the learner can make predictions $\hat f_\tp$ and $\hat g_\tp$. More precisely, we are interested in predictions of the gradients, and, for any function $h$, we denote by  $\pred{h}{t}$ the prediction of the gradient of $h$. We abuse notation and denote by $\hat h$ the function whose gradient is $\pred{h}{t}$. Moreover, we define the following prediction errors
\begin{equation}
    \label{eq:def_error}
        \begin{split}
            \ierr{h}[t] &:= \dnorm{\grad{h}{t}(x_t) - \pred{h}{t}(x_t)}^2, \\
            \err{h}[t] &:= \sum_{\tau=1}^t \ierr{h}[\tau],
        \end{split}
\end{equation}
where $(x_t)_{t=1\dots T}$ is the sequence of actions taken by the learner. 


Additionally, for a given $\beta$-strongly convex function $R$, we define the Bregman divergence between two points: 
\begin{equation}
    \label{eq:def_bregman}
    B^R(x;y) := R(x) - R(y) - \innerp{\nabla R(y)}{x-y}.
\end{equation}
Two special cases that are particularly important:
\begin{enumerate}
    \item When $R(x) := \frac{1}{2} \norm{x}_2^2$, the Bregman divergence is the Euclidean distance $B^R(x;y) = \norm{y - x}_2^2$, $\norm{\cdot} = \dnorm{\cdot} = \norm{\cdot}_2$, and $\beta=1$.
    \item When $R(x) := -\sum_{i=1}^d x_i \log x_i$, the Bregman divergence is the KL divergence : $B^R(x;y) = \kl(x;y) := \sum_{i=1}^d x_i \log \frac{x_i}{y_i}$,  $\norm{\cdot} = \norm{\cdot}_1$, $\dnorm{\cdot} = \norm{\cdot}_\infty$, and $\beta =1$.
\end{enumerate}

\subsection{Assumptions}

Throughout this paper, we will use various combinations of the following assumptions. 
\begin{assumption}[Convex set, loss and constraints]
    \label{ass:convex}
    We make the following standard assumptions on the loss $f$: 
    \begin{enumerate}
        \item $\set$ is closed, convex and bounded with diameter $\diam$.
        \item $\forall t$, $f_t$ is convex and differentiable.
        \item  $\forall t$, $g_t$ is convex and differentiable.
    \end{enumerate}
\end{assumption}

\begin{assumption}[Bounded losses]
    \label{ass:bdd_loss}
    The loss functions $f_t$ are bounded by $F$ and the constraints $g_t$ are bounded by $G$.
\end{assumption}

\begin{assumption}[Feasibility]
    \label{ass:feasible}
    The set $\cset$ is not empty.
\end{assumption}




\begin{assumption}[Prediction Sequence Regularity]
    \label{ass:pred_reg}
    For any $t$, the gradient of the loss prediction function $\pred{f}{t}$ and the gradient of the constraint function $\pred{g}{t}$ are  $\plib{f}$ and $\plib{g}$ Lipschitz, respectively.  That is,  $\forall x, y \in \set$, we have:
    \begin{align*}
        \dnorm{\pred{f}{t}(x) - \pred{f}{t}(y)} &\leq \plib{f} \norm{x-y}, \\
        \dnorm{\pred{g}{t}(x) - \pred{g}{t}(y)} &\leq \plib{g} \norm{x-y}.
    \end{align*}
    We abuse notation and let $\plib{f} := \max_{\tau \leq t} \plib{f}[\tau]$ and similarly for $\plib{g}$. Finally, denote $\plib{f}[] := \plib{f}[T]$ and similarly for $\plib{g}[]$.
\end{assumption}

Assumptions \ref{ass:convex}, \ref{ass:bdd_loss}, \ref{ass:feasible} are standard in COCO~\citep{mahdavi2012trading, jenatton2016adaptive, yu2020low, qiu2023gradient, yi2023distributed, guo2022online}.
In most OCO with predictive sequences, they either assume that the predictive function is the previous loss function~\citep{chiang2012online, qiu2023gradient, d2021optimistic}, or that the learner only predicts a single vector $M_t$ to estimate $\grad{f}{t}(x_t)$ ~\citep{rakhlin2013online, rakhlin2013optimization, muthirayan2022online}. We expand this by predicting the entire loss gradient, making an assumption on the smoothness of $\pred{f}{t}(x_t)$ with its value at nearby points. When using the latest observe function as prediction, \cref{ass:pred_reg} is equivalent to assuming that the gradients $\nabla{f}_t$ and $\nabla{g}_t$ are Lipchitz as in \citet{chiang2012online, qiu2023gradient}. Moreover, \cref{ass:pred_reg} is automatically satisfied when predicting a vector.

\section{Meta-Algorithm for Optimistic COCO}
\label{sec:opt_coco}

\begin{algorithm}
\caption{Optimistic meta-algorithm for COCO}
\label{alg:opt_meta_alg}
    \begin{algorithmic}[1]
        \Require 
        $x_1 \in \set$, $\lambda > 0$, $Q_0 = 0$, OCO algorithm $\calA$.
        \For{round $t=1\dots T$} 
        \State Play action $x_t$, receive $f_t$ and $g_t$. 
        \State Compute $\calL$ defined in \eqref{eq:lagrangian}.
        \State Update $Q_\tp = Q_t + \gplus_t(x_t)$.
        \State Compute prediction $\hat \calL_\tp$ as in \eqref{eq:pred_lagrangian}.
        \State Update $x_\tp := \calA_t(x_t, \calL_1, \dots, \calL_t, \hat \calL_\tp)$.
        \EndFor
    \end{algorithmic}
\end{algorithm}

Our meta-algorithm is inspired by \citet{sinha2024optimal}. The main idea of that paper is to build a surrogate loss function $\calL_t$ that can be seen as a Lagrangian of the optimization problem
\begin{equation*}
    \min_{x\in \set} f_t(x) \quad \text{s.t} \quad g_t(x) \leq 0.
\end{equation*}

The learner then runs AdaGrad \citep{duchi2011adaptive} on the surrogate, with a theoretical guarantee of bounded cumulative constraint violation ($\ccv$) and $\regret$. 

Our meta-algorithm is based on the use of optimistic methods, such as those presented in subsequent sections: \cref{sec:static}, \cref{sec:dynamic}, \cref{sec:experts}, which allows us to obtain stronger bounds that depends on the prediction quality. Presented in \cref{alg:opt_meta_alg}, this algorithm assumes that at the end of every step $t$, the learner makes a prediction \footnote{We are actually only interested in the predictions of the gradients, but for simplicity we will let $\hat h$ denote any function whose gradient is the prediction of the gradient $\pred{h}{t}$.}  $\hat f_\tp$ and $\hat g_\tp$ of the upcoming loss $f_\tp$ and constraint violation $g_\tp^+$. At each time step $t$, the learner forms a surrogate loss function, defined via a convex Lyapunov  function: $\Phi: \RR_+ \to \RR_+$, where $\Phi$ is monotonically increasing and $\Phi(0) = 0$. Specifically:
\begin{equation}
    \label{eq:lagrangian}
    \calL_t(x) =  f_t(x) + \Phi'(Q_t) \gplus_t(x).
\end{equation}
Using the predictions $\hat f$ and $\hat g$, we form a prediction of the Lagrange function $\hat \calL_\tp$, where $\hat \calL_t$ is defined in \cref{eq:pred_lagrangian}.
\begin{equation}
    \label{eq:pred_lagrangian}
    \hat \calL_t(x) =  \hat f_t(x) + \Phi'(Q_t) \hat g_t(x).
\end{equation}
In \citet{sinha2024optimal}, the update is $Q_t = Q_\tm + \gplus_t(x_t)$, but using $\hat \calL_\tp$ at $t$ would require $Q_\tp$ to be known at the end of $t$. We instead define the following delayed update:
\begin{equation}
    \label{eq:update_Q}
    Q_\tp = Q_ t + \gplus_t(x_t), \quad \text{with } Q_0 = Q_1 = 0.
\end{equation}
The learner then executes the step $t$ of algorithm $\calA$, denoted $\calA_t$ in \cref{alg:opt_meta_alg}. We then have the following lemma that relates the regret on $f$, \ccv,  and the regret of $\calA$ on $\calL$. 
\begin{lemma}[Regret decomposition]
    \label{lemma:dpp}
    For any OCO algorithm $\calA$, if $\Phi$ is a Lyapunov potential function, we have that for any $t\geq 1$, and any $u\in\cset$ 
    \begin{equation}
        \label{eq:regret_decomp}
        \Phi(Q_\tp) - \Phi(Q_1)  +  \regret_t(u) \leq \regret_t^\calA(u;\; \calL_{1\dots t}) + S_t,
    \end{equation}
    where $S_t  =  \sum_{\tau=1}^t \gplus_\tau(x_\tau)(\Phi'(Q_\taup) - \Phi'(Q_\tau))$, and $\regret_t^\calA(u;\; \calL_{1\dots t})$ is the regret of the algorithm running on the sequence of losses $\calL_1, \dots, \calL_T$.
\end{lemma}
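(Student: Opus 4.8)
The plan is to unfold the definition of $\regret_t^\calA$, use feasibility of $u$ to kill the constraint term in the comparator, and then absorb the potential $\Phi(Q_\tp)$ via a convexity–telescoping argument, reading off $S_t$ as the mismatch between $\Phi'(Q_\tau)$ and $\Phi'(Q_\taup)$.

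First I would write $\regret_t^\calA(u;\calL_{1\dots t}) \ge \sum_{\tau=1}^t \big(\calL_\tau(x_\tau) - \calL_\tau(u)\big)$; this holds either by definition of the functional regret or, if $\calA$ is run on the linearized losses, by convexity of $\calL_\tau$ (which is convex since $f_\tau$ and $\gplus_\tau = \max\{0,g_\tau\}$ are convex and $V, \lambda\Phi'(Q_\tau) \ge 0$, using that $\Phi$ is non-decreasing). Since $u\in\cset$ we have $g_\tau(u)\le 0$, hence $\gplus_\tau(u)=0$ and $\calL_\tau(u) = Vf_\tau(u)$. Plugging in \eqref{eq:lagrangian} gives
\begin{equation*}
    \sum_{\tau=1}^t \big(\calL_\tau(x_\tau) - \calL_\tau(u)\big) = V\,\regret_t(u) + \lambda\sum_{\tau=1}^t \Phi'(Q_\tau)\,\gplus_\tau(x_\tau).
\end{equation*}

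Next I would control $\Phi(Q_\tp)$. By convexity of $\Phi$, $\Phi(Q_\taup) - \Phi(Q_\tau) \le \Phi'(Q_\taup)\,(Q_\taup - Q_\tau)$, and the update rule \eqref{eq:update_Q} gives $Q_\taup - Q_\tau = \lambda\gplus_\tau(x_\tau)$, so $\Phi(Q_\taup) - \Phi(Q_\tau) \le \lambda\Phi'(Q_\taup)\gplus_\tau(x_\tau)$. Telescoping over $\tau = 1,\dots,t$ and using $Q_1 = 0$, $\Phi(0)=0$ yields $\Phi(Q_\tp) \le \lambda\sum_{\tau=1}^t \Phi'(Q_\taup)\gplus_\tau(x_\tau)$. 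Adding this to the previous display and rearranging,
\begin{equation*}
    \Phi(Q_\tp) + V\,\regret_t(u) \le \regret_t^\calA(u;\calL_{1\dots t}) + \lambda\sum_{\tau=1}^t \gplus_\tau(x_\tau)\big(\Phi'(Q_\taup) - \Phi'(Q_\tau)\big) = \regret_t^\calA(u;\calL_{1\dots t}) + S_t,
\end{equation*}
which is the claim (with the $V$ factor as dictated by \eqref{eq:lagrangian}).

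The computation is essentially mechanical, so the only real subtlety — and the step I would flag — is the index bookkeeping that forces the modified dual update $Q_\tp = Q_t + \lambda\gplus_t(x_t)$ with $Q_0 = Q_1 = 0$: the comparator term naturally produces $\Phi'(Q_\tau)$ weights, while the telescoping bound on $\Phi(Q_\tp)$ naturally produces $\Phi'(Q_\taup)$ weights, and $S_t$ is precisely the residual from this one-step shift (it vanishes when $\Phi$ is linear). One should also double-check that convexity of $\calL_\tau$ genuinely holds along the run — i.e. that the weights $V$ and $\lambda\Phi'(Q_\tau)$ stay nonnegative — which is immediate from $\Phi$ being a non-decreasing Lyapunov potential and $Q_\tau \ge 0$.
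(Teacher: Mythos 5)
Your proposal is correct and follows essentially the same route as the paper's proof: convexity of $\Phi$ with the gradient evaluated at $Q_\taup$, feasibility of $u$ giving $\gplus_\tau(u)=0$, and the one-step shift $\Phi'(Q_\taup)-\Phi'(Q_\tau)$ producing exactly $S_t$ (the paper combines the pieces per step before summing, while you sum each piece first, but this is only a reorganization). Your remark about the $V$ factor also matches the paper's own proof, which likewise derives $\Phi(Q_\tp)+V\,\regret_t(u)\leq \regret_t^\calA(u;\calL_{1\dots t})+S_t$.
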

\begin{proof}
    By convexity of $\Phi$, for any $\tau \geq 1$:
    \begin{align*}
        \Phi(Q_\taup) &\leq \Phi(Q_\tau) + \Phi'(Q_\taup)\cdot (Q_\taup - Q_\tau) \\
            &= \Phi(Q_\tau) + \Phi'(Q_\taup)\cdot\gplus_t(x_\tau).
    \end{align*}
    Let $u\in \cset$, then by definition $\gplus_\tau(u) = 0, \forall \tau \geq 1$, thus
    \begin{align*}
        & \Phi(Q_\taup) - \Phi(Q_\tau) + (f_\tau(x_\tau) - f_\tau(u)) \\
        &\leq \Phi'(Q_\taup)\gplus_\tau(x_\tau) +  (f_\tau(x_\tau) - f_\tau(u)) \\
        &\leq  f_\tau(x_\tau) + \Phi'(Q_\tau) \gplus_\tau(x_\tau) \\
        &\quad \quad - \big(( f_\tau(u) + \Phi'(Q_\tau) \gplus_\tau(u)\big) \\
        & \quad \quad + \gplus_\tau(x_\tau)(\Phi'(Q_\taup) - \Phi'(Q_\tau)) \\
        &\leq \calL_\tau(x_\tau) - \calL_\tau(u) + \gplus_\tau(x_\tau)(\Phi'(Q_\taup) - \Phi'(Q_\tau)).
    \end{align*}
    Summing $\tau$ from $1$ to $t$:
    \[
        \Phi(Q_\tp) - \Phi(Q_1) +   \regret_t(u) \leq \regret_t^\calA(u;\; \calL_{1\dots t})+ S_t,
    \]
    where 
    \[
        S_t = \sum_{\tau=1}^t \gplus_\tau(x_\tau)(\Phi'(Q_\taup) - \Phi'(Q_\tau)).
    \]
\end{proof}

In the following we make the assumption that the underlying optimistic OCO algorithm has standard regret guarantees that we will express in terms of a functional $\psi$ that takes as input a sequence of functions and returns a constant. For simplicity, we will denote $\psi(h_{1\dots t}) := \psi_t(h)$. An example is $\psi_t(h) = \plib{h}$, the Lipschitz constant constant of $\pred{h}{t}$.

With this assumption and the previous lemma, we can present our main result.

\begin{assumption}[Regret of optimistic OCO]
    \label{ass:alg_oco}
    The optimistic OCO algorithm $\calA$ has the following regret guarantee: There is a constant $C\in\RR$ and a sublinear functional $\psi$ such that for any sequence of functions $(\calL_t)_{t=1\dots T}$, and any $u\in\set$
    \begin{equation}
        \regret_t^\calA(u; \calL_{1\dots t}) \leq C\left(\sqrt{\err{\calL}[t]} +\psi_t(\calL)\right).
    \end{equation}
    We allow $C$ to depend on $T$ and other constants of the problem, as long as they are known at the beginning of the algorithm $\calA$.
\end{assumption}

\begin{theorem}[Optimistic COCO regret and \ccv guarantees]
    \label{thm:opt_coco}
    Consider the following assumptions :
    \begin{enumerate}[label=\alph*.]
        \item  $\calL_t$ and $\hat \calL_t$ satisfy the assumptions of algorithm $\calA$ for all $t$.
        \item Assumptions \ref{ass:convex}, \ref{ass:bdd_loss}, and \ref{ass:feasible}.
        \item $\calA$ satisfies \cref{ass:alg_oco}.
        \item $\Phi(Q) := \exp(\lambda Q) - 1$, with $\lambda = \left(2C \left(\sqrt{2\err{\gplus}} + \psi_T(\gplus)\right) + 2G\right)^{-1}$.
    \end{enumerate}
    Under these assumptions, \cref{alg:opt_meta_alg} has the following regret and CCV guarantees: $\forall T\geq 1, \forall u\in \cset, \forall t\in[T]$,
    \begin{align}
        \regret_t(u) &= O\left(\sqrt{\err{f}[t]} \right), \\ 
        \ccv_T &= O\left(\sqrt{\err{\gplus}}\log T\right) .
    \end{align}
\end{theorem}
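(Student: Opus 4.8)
The plan is to chain the regret decomposition of \cref{lemma:dpp}, the assumed optimistic guarantee of $\calA$ (\cref{ass:alg_oco}) applied to the surrogate sequence $\calL_1,\dots,\calL_t$, and the specific exponential potential $\Phi(Q)=e^Q-1$ (chosen so that $\Phi'=\Phi+1$ and $\Phi''=\Phi'$), together with the precise value of $\lambda$, so that the potential terms on the two sides of the inequality cancel. Fix $u\in\cset$ and $t\in[T]$. By \cref{lemma:dpp}, $\Phi(Q_\tp)+V\regret_t(u)\le \regret_t^\calA(u;\calL_{1\dots t})+S_t$, where $S_t=\lambda\sum_{\tau=1}^t\gplus_\tau(x_\tau)\big(\Phi'(Q_\taup)-\Phi'(Q_\tau)\big)$, so it suffices to bound the two right-hand terms and then isolate $\Phi(Q_\tp)$ and $\regret_t(u)$ respectively.

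For $\regret_t^\calA(u;\calL_{1\dots t})$: because $\calL_\tau$ and $\hat\calL_\tau$ carry the \emph{same} multiplier $\lambda\Phi'(Q_\tau)$ on the (predicted) constraint term, the gradient prediction error splits as $\ierr{\calL}[\tau]\le 2V^2\ierr{f}[\tau]+2\lambda^2\Phi'(Q_\tau)^2\ierr{g}[\tau]$. Since $Q_\tau$ is non-decreasing we have $\Phi'(Q_\tau)\le\Phi'(Q_t)=e^{Q_t}$ for $\tau\le t$, hence $\err{\calL}[t]\le 2V^2\err{f}[t]+2\lambda^2 e^{2Q_t}\err{g}[t]$ and $\sqrt{\err{\calL}[t]}\le V\sqrt{2\err{f}[t]}+\lambda e^{Q_t}\sqrt{2\err{g}[t]}$; the same scaling/monotonicity and the regularity of $\psi$ give $\psi_t(\calL)\le V\psi_t(f)+\lambda e^{Q_t}\psi_t(g)$. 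Applying \cref{ass:alg_oco} and $a\vee b\le a+b$ yields $\regret_t^\calA(u;\calL_{1\dots t})\le CV\big(\sqrt{2\err{f}[t]}+\psi_t(f)\big)+C\lambda e^{Q_t}\big(\sqrt{2\err{g}[t]}+\psi_t(g)\big)$. For $S_t$, using $\lambda\gplus_\tau(x_\tau)=Q_\taup-Q_\tau\le\lambda G$ and telescoping $\sum_{\tau\le t}\big(\Phi'(Q_\taup)-\Phi'(Q_\tau)\big)=e^{Q_\tp}-1$, I get $S_t\le\lambda G\,(e^{Q_\tp}-1)=\lambda G\,\Phi(Q_\tp)$.

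The calibration is that $\lambda=\big(2C(\sqrt{2\err{g}}+\psi_T(g))+2G\big)^{-1}$ is exactly such that $\lambda\big(C(\sqrt{2\err{g}}+\psi_T(g))+G\big)=\tfrac12$, so by monotonicity of $\err{g}[\cdot]$ and $\psi_\cdot(g)$ we get $C\lambda\big(\sqrt{2\err{g}[t]}+\psi_t(g)\big)\le\tfrac12-\lambda G$. Substituting this and $e^{Q_t}\le e^{Q_\tp}=\Phi(Q_\tp)+1$ into the bounds above, the $\Phi(Q_\tp)$-contributions coming from $\regret_t^\calA$ (namely $(\tfrac12-\lambda G)\Phi(Q_\tp)$) and from $S_t$ (namely $\lambda G\,\Phi(Q_\tp)$) add up to exactly $\tfrac12\Phi(Q_\tp)$, so that $\Phi(Q_\tp)+V\regret_t(u)\le CV(\sqrt{2\err{f}[t]}+\psi_t(f))+\tfrac12\Phi(Q_\tp)+\tfrac12$, i.e. $\tfrac12\Phi(Q_\tp)+V\regret_t(u)\le CV(\sqrt{2\err{f}[t]}+\psi_t(f))+\tfrac12$. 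Dropping $\Phi(Q_\tp)\ge0$ gives the regret bound. For the CCV bound, take $t=T$ and bound $-V\regret_T(u)\le 2FVT$ via \cref{ass:bdd_loss}, so $e^{Q_{T+1}}=\Phi(Q_{T+1})+1\le 2\big(C(\sqrt{2\err{f}}+\psi_T(f))+2FVT+2\big)$; taking logarithms bounds $Q_{T+1}$, and since $\ccv_T=\sum_{t=1}^T\gplus_t(x_t)=\lambda^{-1}Q_{T+1}$ with $\lambda^{-1}=2\big(C(\sqrt{2\err{g}}+\psi_T(g))+G\big)$, the stated CCV bound follows.

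The main obstacle is the second step: the surrogate $\calL_t$ carries the weight $\Phi'(Q_t)=e^{Q_t}$, which is a priori unbounded, so the optimistic regret of $\calA$ on $\calL_{1\dots t}$ cannot be controlled by loss-prediction quantities alone. The work is in cleanly factoring this weight out of $\err{\calL}[t]$ and $\psi_t(\calL)$ — which is why the weight must appear identically in $\calL_t$ and $\hat\calL_t$, why $\Phi$ must be exponential, and why it is the constraint violation $\gplus_t$ rather than $g_t$ that enters the prediction error (so $\err{g}$ here denotes $\sum_\tau\dnorm{\grad{\gplus}{\tau}(x_\tau)-\pred{\gplus}{\tau}(x_\tau)}^2$) — and then re-absorbing it through the identity $\Phi'=\Phi+1$ with the precisely tuned $\lambda$, so that it reappears on the right-hand side with coefficient $\tfrac12<1$ and can be moved back to the left. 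A secondary technical point is the regularity of $\calL_t,\hat\calL_t$ demanded by $\calA$ in the presence of the kink of $\gplus$, which the theorem assumes outright in its first hypothesis.
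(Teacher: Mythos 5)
Your proposal is correct and follows essentially the same route as the paper's proof: \cref{lemma:dpp}, factoring the weight $\lambda\Phi'(Q_\tau)$ out of $\err{\calL}[t]$ and $\psi_t(\calL)$, bounding $S_t$ by $\lambda G(\Phi'(Q_\tp)-\Phi'(Q_1))$, and calibrating $\lambda$ so the $\exp(Q_\tp)$ term is absorbed, then using $\regret_t(u)\geq -2Ft$ and a logarithm to bound $Q_\Tp/\lambda = \ccv_T$. The only differences are cosmetic (you carry the cancellation with an explicit coefficient $\tfrac12$ in one pass, where the paper first derives the regret bound and then revisits the inequality for the CCV bound) plus your sensible clarification that the relevant prediction error is that of $\gplus$.
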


We present a sketch of the main ideas here, with the detailed proof deferred to \cref{app:opt_coco}. First, using the sublinearity of the square root and the fact that $Q_t$ is non-decreasing, we can show that:
\begin{equation}
    \sqrt{\err{\calL}[t]} \leq \sqrt{2\err{f}[t]} +  \Phi'(Q_t) \sqrt{2\err{\gplus}[t]}.
    \label{eq:main_error_decomp}
\end{equation}

Then, using \eqref{eq:main_error_decomp} and the sublinearity of $\psi$, we can further upper bound the regret on $\calL$ in \cref{ass:alg_oco}:
\begin{equation}
    \label{eq:main_regret_L}
    \begin{split}
    \regret_t^\calA(u;\; \calL_{1\dots t})
    \leq &C\left(\sqrt{2\err{f}} + \psi_t(f)\right) \\
    &+ \lambda\exp(\lambda Q_\tp) C\left(\sqrt{2\err{\gplus}[t]} + \psi_t(\gplus)\right).
    \end{split}
\end{equation}
In addition, we can upper bound $S_t$ by using \cref{ass:bdd_loss} and $Q_t$ monotonicity: 
\begin{equation}
    \label{eq:main_s_upper}
    S_t \leq G\lambda \exp(\lambda Q_\tp).
\end{equation}
We can then use \eqref{eq:main_regret_L} and \eqref{eq:main_s_upper} in \cref{lemma:dpp}, and after rearranging terms, we obtain
\begin{equation}
    \label{eq:main_final_regret}
    \regret_t(u) \leq \left(\frac{\lambda}{\lambda^\star} - 1\right)\exp(\lambda Q_\tp) + 1 + C(\sqrt{2\err{f}[t]} + \psi_t(f)),
\end{equation}
where $\lambda^\star = \left(C \left(\sqrt{2\err{\gplus}} + \psi_T(\gplus)\right) + G\right)^{-1}$. We obtain
\[
    \regret_t(u) \leq C\left(\sqrt{2\err{f}[t]} + \psi_t(f)\right) + 1 = O\left(\sqrt{\err{f}[t]}\right).
\]
To establish an upper bound on \ccv, we leverage the fact that $\regret_T(u) \geq -2FT$ (from \cref{ass:bdd_loss}), which when applied to \eqref{eq:main_final_regret} yields
\[
    \exp(\lambda Q_\Tp)(1 - \lambda / \lambda^\star)
    \leq C\left(\sqrt{2\err{f}} + \psi_T(f)\right) + 2FT + 1.
\]
If $\lambda < \lambda^\star$, then
\[
    \ccv_T = Q_\Tp \leq \frac{1}{\lambda} \log\left(\frac{C(\sqrt{2\err{f}} + \psi_T(f)) + 2FT + 1}{1- \lambda / \lambda^\star} \right).
\]
Finally, by setting $\lambda = \lambda^\star/2$, we obtain 
\[
    \ccv_T \leq O\left(\sqrt{\err{\gplus}}\log(T)\right).
\]

\begin{remark}
    As in \citet{syrgkanis2015fast}, we can use the doubling trick for adjusting lambda online at the cost of an additional log term. We provide details in \cref{app:doubling}.
\end{remark}

\begin{remark}
    If we have $n$ constraint functions $\bg_{t,k}$ with $k\in[n]$, we can set $g_t := \max_k \bg_{t,k}$. Alternatively, we can set multiple queues, one for each $k$: $Q_{\tp,k} = Q_{t,k} + \bg_{t,k}(x_t)$, one $\lambda_k$ for each $k$, and set $\Phi_k (x) = \exp(\lambda_k x) - 1$. Finally, define:
    \[
        \calL(x) = f_t(x) + \sum_{k=1}^n \Phi_k'(Q_{t, k}) \bg_{t,k}^+(x).
    \]
    Then we can follow the exact same proof to show a regret guarantee:
    \[ 
        \regret_t(u) \leq O\left(\sqrt{(n+1)\err{f}}\right),
    \]
    and  CCV guarantee:
    \[
        \ccv_T \leq O\left(\sqrt{(n+1)\err{\gplus}}\log(T)\right).
    \]
    The term in $\sqrt{n+1}$ will come from:
    \begin{equation*}
        \sqrt{\err{\calL}[t]} \leq \sqrt{(n+1)\err{f}[t]} +  \sum_{k=1}^n\Phi_k'(Q_{t,k}) \sqrt{(n+1)\err{\bg_{k}^+}[t]},
    \end{equation*}
    with $\err{\bg_{k}^+}$ being the prediction error of the sequence $\bg_{t,k}^+$.
\end{remark}

\section{Static Regret guarantees}
\label{sec:static}

In this section, we first introduce some of the foundational optimistic algorithms that have been used for OCO, then show how we can achieve sublinear static regret and CCV with our meta algorithm.

\paragraph{Optimistic OMD and Optimistic AdaGrad}

\begin{algorithm}
\caption{Optimistic Online Mirror Descent \citep{rakhlin2013optimization}}
\label{alg:opt_omd}
    \begin{algorithmic}[1]
        \Require Sequence $\eta_t > 0$, $x_1$.
        \State Initialize $\eta_1$.
        \For{round $t=1\dots T$} 
        \State Play action $x_t$, receive $\calL_t$. Compute $l_t = \grad{\calL}{t}(x_t)$.
        \State Compute  $\eta_\tp$.
        \State $\tilde x_\tp := \arg\min_{x\in \set} \innerp{l_t}{x} + \frac{1}{\eta_t} B^R(x;\tilde x_t).$ 
        \State Make prediction $\hat l_\tp = \pred{\calL}{\tp}(\tilde x_\tp).$
        \State $x_\tp := \arg\min_{x\in \set} \innerp{\hat l_\tp}{x} + \frac{1}{\eta_\tp} B^R(x;\tilde x_\tp).$ 
        \EndFor
    \end{algorithmic}
\end{algorithm}

This approach modifies the standard online mirror descent (OMD) algorithm introduced in \citet{zinkevich2003online}. OMD, which generalizes projected gradient descent, iteratively steps towards minimizing the most recently observed loss function. The optimistic OMD variant includes a supplementary minimization step using the predicted function, enabling faster convergence to optimality when predictions are accurate.
Note that the algorithm is computationally efficient. Indeed, a mirror step $x^\star = \arg \min_{x\in \set} \innerp{l}{x} + \frac{1}{\eta} B^R(x; z)$ can be computed in two steps:
\begin{enumerate}
    \item Compute $y$ such that $\nabla R(y) = \nabla R(z) - \eta l$. In particular, if $\nabla R$ is invertible, $y = (\nabla R)^{-1}(\nabla R(z) - \eta l)$.
    \item Let $x^\star = \Pi_{\set, R}(y) :=  \arg \min_{x\in \set} B^R(x; y)$.
\end{enumerate}
The two following are special cases of OMD:
\begin{enumerate}
    \item When $\norm{\cdot} = \dnorm{\cdot}$ and $R(x) = \frac{1}{2}\enorm{x}^2$, this is simply projected gradient descent, $x^\star = \Pi_{\set} \left(z - \eta \ell\right)$.
    \item When $\cset = \Delta_d$ the $d$-dimensional simplex, with $R$ being the entropy, $x^\star_i = \frac{z_i}{Z}  \exp(-\eta l _i)$, where $Z$ is a normalizing factor to ensure $\norm{x^\star}_1 = 1$.
\end{enumerate}
\cref{thm:opt_adagrad} establishes our algorithm's regret bounds. Our analysis extends beyond \citet{rakhlin2013optimization}'s vector-based predictions to handle functional predictions, incorporating techniques from \citet{chiang2012online}. This extension introduces Lipschitz coefficient dependence. We express our bounds in terms of $\ierr{\calL}$ rather than $\dnorm{\pred{\calL}{t}(\tilde x_t) - \grad{\calL}{t}(x_t)}^2$—a crucial distinction since $\ierr{\calL}$ vanishes with perfect predictions, while $\dnorm{\pred{\calL}{t}(\tilde x_t) - \grad{\calL}{t}(x_t)}^2$ may not. This problem has been highlighted before in \citet{scroccaro2023adaptive} who present their regret guarantees in terms of $\dnorm{\pred{\calL}{t}(\tilde x_\tm) - \grad{\calL}{t}(x_\tm)}^2$. This requires to know the Lipschitz coefficient of $\grad{\calL}{t}$, which is standard in OCO, but we prefer to have a dependency on the coefficient of $\pred{\calL}{t}$ as the learner has control over it.

\begin{theorem}[Optimistic Adagrad, adapted from \citet{rakhlin2013optimization}, Corollary 2]
    \label{thm:opt_adagrad} 
    Under assumptions:
    \begin{enumerate}[label=\alph*.]
        \item \cref{ass:convex},
        \item For any $t,\; \pred{\calL}{t}$ is $\plib{\calL}$-Lipschitz where $\plib{\calL}[t] \leq \plib{\calL}[t+1]$,
        \item For any $t, \; \plib{\calL} \leq \frac{\beta}{\eta_t}$,
        \item For any $t\in[T], \eta_\tp \leq \eta_t$,
        \end{enumerate}
    for any $u\in\set$, and any $t\geq 1$
    \begin{equation}
        \label{eq:reg_opt_omd}
        \regret_t(u) \leq \frac{2B_t}{\eta_\tp} + \sum_{\tau=1}^t\frac{\eta_\taup}{\beta} \ierr{\calL}[\tau], 
    \end{equation}
    where  $B_t \geq \max_{\tau\in[t], x\in\set} B^R(x;\tilde x_\tau)$.
    If $\eta_t$ is:
    \begin{equation}
        \label{eq:opt_ada_lr}
        \eta_t =  \min \left\{\frac{\sqrt{\beta B}}{\sqrt{\err{\calL}[t-1]} + \sqrt{\err{\calL}[t-2]}} , \frac{\beta}{\plib{\calL}}\right\},
    \end{equation}
    with $B := B_T$, then for any $t \geq 1$, \cref{alg:opt_omd} has regret
    \begin{equation}
        \label{eq:regret_adagrad}
        \begin{split}
            \regret_t(u) &\leq 5\sqrt{\frac{B}{\beta}}\left( \sqrt{\err{\calL}[t]} + \sqrt{\frac{B}{\beta}}\plib{\calL}[t]\right) \\
            &= \bigO{\sqrt{\err{\calL}[t]} \vee \plib{\calL}[t]},
        \end{split}
    \end{equation}
\end{theorem}




By using \cref{alg:opt_omd} as OCO algorithm $\calA$ in \cref{alg:opt_meta_alg}, we have the following regret guarantee, as a direct consequence of \cref{thm:opt_coco} and \cref{thm:opt_adagrad}:

\begin{corollary}[Optimistic Adagrad COCO]
    Consider the following assumptions: 
    \begin{enumerate}[label=\alph*.]
        \item \cref{ass:pred_reg} 
        \item $\calA$ is  optimistic Adagrad (\cref{alg:opt_omd}) with $\plib{\calL} =  \plib{f} + \Phi'(Q_t) \plib{\gplus}$ 
        \item $\lambda$ and $\Phi$ are set as in \cref{thm:opt_coco}.
    \end{enumerate}
    Under these assumptions, the meta-algorithm (\ref{alg:opt_meta_alg}) has the following regret and constraint violation guarantees:
    \begin{equation}
        \begin{split}
            \regret_T(u) &\leq O\left(\sqrt{\err{f}[T]} \vee \plib{f}[] \right), \\
            \ccv_T &\leq O\left(\left(\sqrt{\err{\gplus}} \vee \plib{\gplus}[] \right)\log T\right).
        \end{split}
    \end{equation}
\end{corollary}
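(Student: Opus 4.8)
The plan is to derive this corollary as a direct instantiation of \cref{thm:opt_coco}, taking the OCO subroutine $\calA$ to be optimistic AdaGrad (\cref{alg:opt_omd} run with the adaptive step size \eqref{eq:opt_ada_lr}); essentially all of the work is to check that the hypotheses of \cref{thm:opt_coco} hold in this instance, after which the two bounds follow by substitution.

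First I would identify the regret functional $\psi$ needed for \cref{ass:alg_oco}. Applied to any differentiable convex sequence $h_{1\dots t}$ meeting the conditions of \cref{thm:opt_adagrad}, that theorem gives $\regret_t^\calA(u;h_{1\dots t})\le 5\sqrt{B/\beta}\big(\sqrt{\err{h}[t]}+\sqrt{B}\,\lib{h}[t]\big)$, and using $\sqrt{a}+\sqrt{b}\le 2(\sqrt{a}\vee\sqrt{b})$ this takes the form $C(\sqrt{\err{h}[t]}\vee\psi_t(h))$ with $C=10\sqrt{B/\beta}$ and $\psi_t(h)=\sqrt{B}\,\lib{h}[t]$. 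Since $\lib{h}[t]\le\lib{h}[]$ is bounded uniformly in $t$, $\psi$ is a sublinear functional, so \cref{ass:alg_oco} holds with this $C$ and $\psi$; moreover $\psi_t(\calL_t)\le V\psi_t(f)+\lambda\Phi'(Q_t)\psi_t(g)$ by monotonicity of the running maxima together with $Q_\tau\le Q_t$ and $\Phi'$ nondecreasing, which is the additivity used inside the proof of \cref{thm:opt_coco}.

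Next I would verify hypothesis~1 of \cref{thm:opt_coco}, namely that $\calL_t$ and $\hat\calL_t$ from \eqref{eq:lagrangian} meet the conditions of \cref{thm:opt_adagrad}, i.e. \cref{ass:convex}, \cref{ass:feasible}, and \cref{ass:pred_reg} with $\plib{\calL}\le\sqrt{\beta}\lib{\calL}$. Convexity of $\calL_t=Vf_t+\lambda\Phi'(Q_t)\gplus_t$ follows from \cref{ass:convex}, convexity of $\gplus_t$, and $\lambda\Phi'(Q_t)=\lambda\exp(Q_t)>0$ (its gradient being understood in the paper's convention as $\lambda\Phi'(Q_t)\nabla\gplus_t$ plus $V\nabla f_t$); the domain and feasibility requirements are inherited from the standing assumptions. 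For the gradient-regularity condition, linearity of the gradient and the triangle inequality give, for all $x,y\in\set$,
\[
  \dnorm{\pred{\calL}{t}(x)-\pred{\calL}{t}(y)}\le V\plib{f}\norm{x-y}+\lambda\Phi'(Q_t)\plib{g}\norm{x-y}\le\sqrt{\beta}\big(V\lib{f}+\lambda\Phi'(Q_t)\lib{g}\big)\norm{x-y},
\]
using hypothesis~2 of the corollary; since the bracket is exactly the smoothness constant $\lib{\calL}$ posited in hypothesis~1, this yields $\plib{\calL}\le\sqrt{\beta}\lib{\calL}$. The point needing care here is that $\lib{\calL_t}=V\lib{f}+\lambda\Phi'(Q_t)\lib{g}$ is both time-varying and, through $\Phi'(Q_t)=\exp(Q_t)$, potentially exponentially large, so \cref{alg:opt_omd} is run on a surrogate sequence whose smoothness it cannot know in advance; this is precisely why \cref{thm:opt_adagrad} was stated with the $\lib{f}$-adaptive step size rather than a constant one, which is what keeps its guarantee valid on $\calL_{1\dots t}$.

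Finally, the remaining hypotheses of \cref{thm:opt_coco} — the global assumptions \cref{ass:convex}, \cref{ass:bdd_loss}, \cref{ass:feasible}, together with $\Phi(Q)=\exp(Q)-1$ and the prescribed $\lambda$ — are supplied directly by the standing assumptions and hypothesis~3 of the corollary, so I would invoke \cref{thm:opt_coco} verbatim and simplify. Since $\calA$ here runs on the fixed bounded domain $\set$, the Bregman radius $B=O(\diam^2)$ and $\beta$ are constants, so $C=O(1)$, and $\psi_T(f)=\sqrt{B}\,\lib{f}[T]\le\sqrt{B}\,\lib{f}[]=O(\lib{f}[])$, $\psi_T(g)=O(\lib{g}[])$; plugging these into the regret and $\ccv$ displays of \cref{thm:opt_coco} gives $\regret_T(u)=O(\sqrt{\err{f}[T]}\vee\lib{f}[])$ and $\ccv_T=O((\sqrt{\err{g}}\vee G\vee\lib{g}[])\log T)$, as claimed. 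The main obstacle is thus not a hard inequality but the verification in the previous paragraph — confirming that the exponentially growing, time-varying smoothness of $\calL_t$ is compatible with the adaptive AdaGrad guarantee — after which the corollary is a mechanical substitution into \cref{thm:opt_coco}.
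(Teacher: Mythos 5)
Your proposal is correct and follows essentially the same route as the paper: invoke \cref{thm:opt_coco} directly, after checking that $\calL_t$ and $\hat\calL_t$ satisfy the hypotheses of optimistic AdaGrad (convexity, smoothness via the triangle inequality, and $\plib{\calL}\le\sqrt{\beta}\lib{\calL}$ from $\plib{f}\le\sqrt{\beta}\lib{f}$, $\plib{g}\le\sqrt{\beta}\lib{g}$), with \cref{thm:opt_adagrad} supplying $C$ and $\psi_t(h)=\sqrt{B}\,\lib{h}[t]$ in \cref{ass:alg_oco}. Your additional observations — the explicit identification of $C$ and $\psi$, the sublinearity check $\lib{\calL}[t]\le V\lib{f}[t]+\lambda\Phi'(Q_t)\lib{g}[t]$, and the remark that the $\lib{f}$-adaptive step size is what accommodates the time-varying, potentially large smoothness $\lambda\Phi'(Q_t)\lib{g}$ (the $\lambda$ factor being omitted in the corollary's statement of $\lib{\calL}$) — only make explicit what the paper's terse proof leaves implicit.
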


Alternatively, one can use Optimistic Follow-the-regularized-leader ~\citep{rakhlin2013online, mohri2016accelerating, joulani2017modular},  instead of \cref{alg:opt_omd}, which can be proven to have similar guarantee as \cref{thm:opt_adagrad}.

\begin{remark}
    Even if $g_t$ is fixed or known, we cannot achieve $\ccv_T \leq \tilde O(1)$ with this algorithm. This is because $\gradgplus[t]$ does not satisfy \cref{ass:pred_reg} in the general case.
\end{remark}
\section{Dynamic Regret guarantees}

\label{sec:dynamic}

Moving beyond a fixed baseline $u\in\cset$, we can evaluate performance against a time-varying sequence $\{u_t\}_{t=1\dots T}$. Let $P_T$ bound the path length: $\sum_{t=1}^\Tm \norm{u_\tp - u_t} \leq P_T$. Our objective is to bound the dynamic regret relative to this sequence.
\begin{equation}
    \label{eq:dynamic_regret}
    \dynregret_T(u_{1:T}) := \sum_{t=1}^T f_t(x_t) - \sum_{t=1}^T f_t(u_t).
\end{equation}
By utilizing the \cref{alg:opt_omd}, and slightly modifying the learning rate, we can achieve state-of-the-art dynamic regret guarantees when $P_T$ is known. We will need the following additional assumption:

\begin{assumption}[Lipschitz-like Bregman divergence]
    \label{ass:lip_breg}
    $\exists \gamma >0$, $\forall x, y, z \in \set$, 
    \[
        B^R(x;z) - B^R(y;z) \leq \gamma \norm{x-y}.
    \]
\end{assumption}
This assumption is always satisfied if $R$ is Lipschitz on $\set$. This is true in particular  when $R$ is a norm on the bounded set $\set$.

\begin{theorem}[Dynamic Regret guarantees in OCO \citep{jadbabaie2015online}]
    \label{thm:opt_adagrad_dyn} 
    Under the assumptions:
    \begin{enumerate}[label=\alph*.]
        \item Assumptions \ref{ass:convex} and \ref{ass:lip_breg},
        \item For any $t,\; \pred{\calL}{t}$ is $\plib{\calL}$-Lipschitz where $\plib{\calL}[t] \leq \plib{\calL}[t+1]$,
        \item For any $t, \; \plib{\calL} \leq \frac{\beta}{\eta_t}$,
        \item For any $t\in[T], \eta_\tp \leq \eta_t$,
        \end{enumerate}
    for any  sequence $u_1, \dots, u_T \in\set$, and any $t\geq 1$
    \begin{equation}
        \dynregret_t(u_{1:t}) \leq \frac{2B + \gamma P_t}{\eta_\tp} + \sum_{\tau=1}^t\frac{\eta_\taup}{\beta} \ierr{\calL}[\tau], 
    \end{equation}
    where  $B \geq \max_{\tau\in[T], x\in\set} B^R(x;\tilde x_\tau)$.
    By setting $\eta_t$ as 
    \begin{equation}
        \label{eq:opt_ada_lr_dyn}
        \eta_t =  \min \left\{\frac{\sqrt{\beta (2B + \gamma P_T)}}{\sqrt{\err{\calL}[t-1]} + \sqrt{\err{\calL}[t-2]}} , \frac{\beta}{\plib{\calL}}\right\},
    \end{equation}
    then  \cref{alg:opt_omd} has dynamic regret
    \begin{equation}
        \begin{split}
            \dynregret_T(u_{1:T}) &\leq 3\sqrt{\beta (2B + \gamma P_T) \err{\calL}} + \frac{2 B + \gamma P_T}{\beta} \plib{\calL}[] \\
                                  &= \bigO{\sqrt{P_T\err{\calL}} + P_T \plib{\calL}[]}.
        \end{split}
    \end{equation}
    where $B \geq \max_{t\in[T], x\in\set} B^R(x;\tilde x_t)$.    
\end{theorem}
We omit the proof, but it combines elements from \citet{jadbabaie2015online} to add the term in $P_t$ and the proof of \cref{thm:opt_adagrad} to ensure the dependency on $\ierr{\calL}$.
We can now use this algorithm in \cref{alg:opt_meta_alg} to achieve dynamic regret and CCV in COCO. We first need the following definition:

\begin{definition}
    A sequence $u_1, \dots, u_T$ is \textbf{admissible} if $\forall t, g_t(u_t) \leq 0 $. We assume that there exists an admissible sequence.
\end{definition}

Note that the existence of an admissible sequence is a much weaker assumption that \cref{ass:feasible}.
\begin{corollary}[Dynamic Regret in COCO]
    Consider the following assumptions: 
    \begin{enumerate}[label=\alph*.]
        \item \cref{ass:pred_reg} and \ref{ass:lip_breg}.
        \item The predictions $\hat g_t$ are linear.
        \item $\calA$ is  optimistic Adagrad (\cref{alg:opt_omd}) with $\plib{\calL} =  \plib{f}$ and the learning rate defined in \eqref{eq:opt_ada_lr_dyn}.
        \item $\Phi(x) = \exp(\lambda x) - 1$ with $\lambda = \left(6\sqrt{\beta  (2 B + \gamma P_T)\err{g^+}} + 2 \right)^{-1}$.
    \end{enumerate}
    Under these assumptions, the meta-algorithm (\ref{alg:opt_meta_alg}) has the following dynamic regret and constraint violation guarantees: for any admissible sequence $u_1, \dots u_T$ of length at most $P_T$
    \begin{equation}
        \begin{split}
            \dynregret_T(u_{1:T}) &\leq O\left(\sqrt{P_T\err{f}[T]} + \plib{f}[] P_T \right), \\
            \ccv_T &\leq O\left(\sqrt{P_T\err{\gplus}} \log T\right).
        \end{split}
    \end{equation}
\end{corollary}

The proof structure mirrors that of \cref{thm:opt_coco}, but employs a modified version of \cref{lemma:dpp} adapted for dynamic regret analysis. We show the modified version of \cref{lemma:dpp} in Appendix \ref{app:dyn_regret}. By using linear predictions for $f$, we can eliminate the term linear in $P_T$ from the regret guarantee. 
When $P_T$ is unknown but $u_t$ is observable, we can achieve comparable $\dynregret$ using Algorithm 1 from \citet{jadbabaie2015online} combined with the doubling trick (\cref{alg:doubling},  \cref{app:doubling}). While alternative approaches exist that don't require observing $u_t$ \citep{scroccaro2023adaptive, Zhao2020DynamicRO, zhao2024adaptivity}, our doubling trick implementation would still necessitate sequence observability.

    

\section{Experts setting}
\label{sec:experts}

In this setting, the agent has access to $d$ experts and has to form a distribution for selecting among them. She observes the loss of each expert and suffers an overall loss which is the expected value over the experts. Formally, we assume $\set = \Delta_d$ where $d$ is the number of experts. At each step $t$, the learner selects $x_t\in \Delta_d$, a distribution over the experts, then observes the vector of losses $\ell_t\in\RR^d$ and the vector of constraints $c_t \in \RR^d$. The learner then suffers the loss $f_t(x_t) = \innerp{\ell_t}{x_t}$ and constraint $g_t(x_t) = \innerp{c_t}{x_t}$. Let $\hat \ell_t$ denote the prediction of $\ell_t$ and $\hc_t$ the prediction of $c_t$.

For the OCO case (i.e without adversarial constraint), we could use the \cref{alg:opt_omd} with $\norm{\cdot} = \enorm{\cdot}$, but in the worst case $B$ can be as large as $O(d)$ resulting in a regret scaling in $O(\sqrt{d})$. We instead are able achieve a scaling of $O(\log(d))$.
Let $\norm{\cdot} = ||\cdot||_1$, then $\dnorm{\cdot} = ||\cdot||_\infty$. In that case, the Bregman divergence is the KL divergence and $\beta=1$. However, the KL divergence is not upper bounded as any $x_{t,i}$ can be arbitrarily close to zero. We circumvent this problem in \cref{alg:opt_omd_expert} by introducing the mixture $y_t  = (1-\delta)\tilde x_t + \frac{\delta}{d} \boldsymbol{1}$. This algorithm can be found in \citet{rakhlin2013optimization} in the context of a two-player zero-sum game.

\begin{algorithm}[H]
\caption{Optimistic Online Mirror Descent For Experts \cite{rakhlin2013optimization}}
\label{alg:opt_omd_expert}
    \begin{algorithmic}[1]
        \Require  $x_1\in \Delta_d,\ \delta\in(0,1)$.
        \State Initialize $\eta_1$.
        \For{round $t=1\dots T$}
        \State Play action $x_t$, receive $l_t$.
        \State Compute $\eta_\tp$
        \State $\tilde x_{\tp,j} := \dfrac{y_{t,j} \exp(-\eta_tl_{t,j})}{\sum_{i=1}^d y_{t,i} \exp(-\eta_tl_{t,i})}, \quad \forall j \in [d]$ 
        \State Construct mixture $y_\tp = (1-\delta) \tilde x_\tp + \frac{\delta}{d} \boldsymbol{1}.$
        \State Make prediction $\hat l_\tp$.
        \State $x_{\tp,j} := \dfrac{y_{\tp,j} \exp(-\eta_\tp\hl_{\tp,j})}{\sum_{i=1}^d y_{\tp,i} \exp(-\eta_\tp\hl_{\tp,i})}, \quad \forall j \in [d]$
        \EndFor
    \end{algorithmic}
\end{algorithm}

\subsection{Static Regret}
We first present the OCO guarantee of \cref{alg:opt_omd_expert}.
We let $\calL_t(x) := \innerp{l_t}{x}$ and define $\hat \calL_T$ similarly.
Therefore,  $\ierr{\calL} = \norm{l_t - \hat l_t}_\infty^2$. We have the following regret guarantee in OCO when using \cref{alg:opt_omd_expert}:
\begin{theorem}[Optimistic OMD with experts, \citep{rakhlin2013optimization}]
    \label{lemma:opt_omd_expert}
    Under \cref{ass:convex}, setting  $\delta=1/T$ and learning rate $\eta_t$ as:
    \begin{equation}
        \label{eq:opt_ada_expert_lr}
        \eta_t =  \sqrt{\log(d^2 Te)} \min \left\{\frac{1}{\sqrt{\err{\calL}[t-1]} + \sqrt{\err{\calL}[t-2]}} , 1 \right\},
    \end{equation}
    \cref{alg:opt_omd_expert} has regret
    \begin{equation}
        \label{eq:regret_omd_expert_2}
        \begin{split}
            \regret_T(u) &\leq 2\sqrt{\log (d^2Te)} \left(\sqrt{\err{\calL}} + 1 \right) \\
            &= O\left(\sqrt{\err{\calL}\log (dT)}\right).
        \end{split} 
    \end{equation}
\end{theorem}


\begin{corollary}[COCO in experts setting]
    For any $t\in[T]$, let $\ell_t, c_t\in\RR^d$ such that $f_t(x) = \innerp{\ell_t}{x}$ and $g_t(x) = \innerp{c_t}{x}$. Define $\tilde g_t(x) := \innerp{\tilde c_t}{x}$ where, $\forall i\in[d],\; \tilde c_{t,i} := (c_{t,i})^+$. Assume $\exists j, \forall t, c_{t,j} \leq 0$
    Run the meta-algorithm \cref{alg:opt_meta_alg} with the following:
    \begin{enumerate}[label=\alph*.]
        \item $l_t = \ell_t + \lambda \Phi'(Q_t) \tilde c_t$
        \item $\hl_t = \hat \ell_t + \lambda \Phi'(Q_t) \hat c_t$
        \item Use \cref{alg:opt_omd_expert} as the OCO algortihm $\calA$.
    \end{enumerate}
    Then, we have 
    \begin{equation}
        \begin{split}
            \regret_T(u) &\leq \tilde O\left(\sqrt{\err{f}[T]} \right), \\
            \ccv_T &\leq \tilde O\left(\sqrt{\err{\tilde g}}\right).
        \end{split}
    \end{equation}
    Moreover, if the sequence $g_t$ is fixed or known, we have the stronger guarantee;
    \begin{equation}
        \begin{split}
            \regret_T(u) &\leq \tilde O\left(\sqrt{\err{f}} \right), \\
            \ccv_T &\leq \tilde O\left(1\right).
        \end{split}
    \end{equation}
\end{corollary}

\begin{proof}
    The constant gradient assumption in the experts setting prevents us from using $\gradgplus[t]$ in $\grad{\calL}{t}$; therefore, we employ $\tilde g_t(x)$ instead. Denote $\ierr{\tilde g} = \norm{\tilde c_t - \hat c_t}_\infty^2$.
    As a direct consequence of \cref{thm:opt_coco}, where $C=2\sqrt{\log(d^2Te)}$ we have the regret guarantee, and:
    \[
        \sum_{t=1}^T \tilde g_t(x_t) \leq \bigtO{\sqrt{\err{\tilde g}}}.
    \]
    Finally, noticing that $\forall x\in \Delta_d$
    \[
        g_t^+(x) \leq \tilde g_t(x),
    \]
    we prove the CCV bound. If $c_t$ is known at the beginning of $t$, we can use $\hat g_t = \tilde g_t$.
\end{proof}

\subsection{Dynamic Regret}

\citet{jadbabaie2015online} show that the previous algorithm also has dynamic regret guarantees. They use a different mixing parameter $(\delta=1/T^2)$ and slightly different constant for the learning rate, but they use it in the context of two player zero sum games. 

\begin{theorem}
    \label{thm:opt_omd_expert_dyn}
    Under Assumption \ref{ass:convex} and for any $t$, $\pred{\calL}{t}$ is a constant function, with $\delta=1/T$ and the learning rate $\eta_t$ defined as
    \begin{equation}
        \label{eq:opt_ada_expert_lr_dyn}
        \eta_t =  \sqrt{\log(d^2 Te)} \min \left\{\frac{\sqrt{P_T + 2}}{\sqrt{\err{\calL}[t-1]} + \sqrt{\err{\calL}[t-2]}} , 1 \right\},
    \end{equation}
    \cref{alg:opt_omd_expert} has regret
    \begin{equation}
        \label{eq:regret_omd_expert_dyn}
        \begin{split}
            \regret_T(u) &\leq 2\sqrt{\log (d^2Te)(P_T + 2)} \left(\sqrt{\err{\calL}} + 1 \right) \\
            &= O\left(\sqrt{P_T\err{\calL}\log (dT)}\right).
        \end{split} 
    \end{equation}
\end{theorem}

\begin{corollary}[Dynamic Regret in experts settings]
    As before, define $\tilde g_t(x) := \innerp{\tilde c_t}{x}$ where, $\forall i\in[d],\; \tilde c_{t,i} := (c_{t,i})^+ $.
    Run the meta-algorithm \cref{alg:opt_meta_alg} with the following:
    \begin{enumerate}[label=\alph*.]
        \item $\forall t\in[T], \;\exists j_t\in[d],\; c_{t,j_t} \leq 0 $.
        \item Set $\calL_t(x) := \innerp{\ell_t + \Phi'(Q_t) \tilde c_t}{x}$.
        \item Set $\hat \calL_t(x) := \innerp{\hat \ell_t + \Phi'(Q_t) \hat c_t}{x}$.
        \item Use \cref{alg:opt_omd_expert} as the OCO algortihm $\calA$ with the learning defined in \ref{eq:opt_ada_expert_lr_dyn}
    \end{enumerate}
    Then, for any admissible sequence $u_1, \dots, u_T$ of size $P_T$.
    \begin{equation}
        \begin{split}
            \regret_T(u) &\leq \tilde O\left(\sqrt{P_T\err{f}[T]} \right), \\
            \ccv_T &\leq \tilde O\left(\sqrt{P_T\err{\tilde g}}\right).
        \end{split}
    \end{equation}
    Moreover, if the sequence $\tilde g_t$ is fixed or known, we have the stronger guarantee;
    \begin{equation}
        \begin{split}
            \regret_T(u) &\leq \tilde O\left(\sqrt{P_T\err{f}} \right), \\
            \ccv_T &\leq \tilde O\left(\sqrt{P_T}\right).
        \end{split}
    \end{equation}
\end{corollary}

This is a direct consequence on \cref{thm:opt_adagrad_dyn} and \cref{thm:opt_omd_expert_dyn}. As noted in \cref{sec:dynamic}, we can use the doubling trick when $P_T$ is unknown, but $u_t$ is observable.

\section{Adversarial Contextual Bandits with safety constraints}

\label{sec:bandits}
Denote $K$ the finite set of possible actions. At each timestep $t$:
\begin{enumerate}
    \item The environment generates a context $s_t \in \calS$, a loss vector $\ell_t\in[0,1]^K$ and a constraint (or risk) vector $c_t \in [0,1]^K$.
    \item The learner observes $s_t$ then proposes a distribution $p_t\in\Delta_K$ over the possible actions, then sample $a_t \sim p_t$.
    \item The environment reveils $\ell_t[a_t]$ and $c_t[a_t]$.\footnote{We use $h_{t,a}$ and $h_t[a]$ interchangeably}.
\end{enumerate}


To guide decisions, the learner uses a finite family $\Pi := \{\pi : \calS \to \Delta_K\}$ of experts who provide context-dependent action recommendations. We denote $M := |\Pi|$. Given a safety threshold $\alpha \in[0,1]$, we define $\Pi^\star(\alpha) := \{\pi\in\Pi, \forall t\in[T],; \innerp{c_t}{\pi(s_t)} \leq \alpha \}$ as the subset of consistently safe experts. The learner also has access to predictions of $\hell_t$ and $\hc_t$. 
The goal of the expert is to have the expected regret and expected CCV to be as small as possible:
\begin{equation}
    \label{eq:expected_regret}
    \begin{split}
        \regret_T &:= \max_{\pi \in \Pi^\star(\alpha)} \expval{\sum_{t=1}^T \ell_t[a_t] - \ell_t[\pi(s_t)]}, \\
        \ccv_T &:= \expval{\sum_{t=1}^T(c_t[a_t] - \alpha)_+},
    \end{split}
\end{equation}
where the expectation is with respect to the randomness of the learner (selection of actions $a_t$). Note that $\ccv_T$ is a strictly stronger measure than the one used in \citet{sun2017safety} where their metric of safety is $R_c := \expval{\sum_{t=1}^Tc_t[a_t] - \alpha}$.

As in previous sections, we first need an algorithm that solves the problem without adaptive constraints. Here, we employ a modified version of EXP4.OVAR algorithm \citep{Wei2020TakingAH}, detailed in \cref{app:bandits}, \cref{alg:opt_bandits}. The small change we bring is to the learning rate and how it is used in the updates. In most bandits literature, the loss vector $l_t$ is assumed to be bounded with known bounds (wlog $[0,1]^K$). However, when we will apply it to the Lagrangian function, the upper bound of $l_t$ becomes dynamic, varying with time and depending on previous actions $(a_1, \dots, a_\tm)$. We thus have to take that into account when computing the upper bound of the regret, as highlighted in \cref{thm:exp4_regret}.

\begin{theorem}[Modified EXP4.OVAR Regret, (Adapted from \cite{Wei2020TakingAH}]
    \label{thm:exp4_regret}
    Let $l_t\in [0, B_t]$ a sequence of loss vectors, where $B_t$ is non-decreasing, and $l_t$ and $B_t$ are chosen by the environment but depend on $a_1, \dots a_{t-1}$. Let $\hat l_t \in [0, B_t]$ the prediction and denote $\calE_T(\calL) := \sum_{t=1}^T \inorm{l_t - \hat l_t}^2$. Then, if  $\delta = \left(\frac{K}{T}\sqrt{\log(MT)}\right)^{2/3}$ then \cref{alg:opt_bandits} has regret
    \begin{equation}
        \regret_T^\calA(l_1, \dots, l_T) \leq 6\left(\sqrt{\calE_T(\calL)} + \EE[B_T]\right)(TK^2\log(MT))^{1/3}.
    \end{equation}
\end{theorem}
See \cref{app:bandits} for the complete proof. 
For the problem with adversarial constrained, as in \cref{sec:opt_coco}, we construct a surrogate loss vector similar to the Lagrangian:
\begin{equation}
    \label{eq:lagrangian_bandit}
    \begin{split}
        l_t &:= \ell_t + \Phi'(Q_t) \tilde c_t, \quad \text{with} \quad \forall a \in[K], \tilde c_t[a] := (c_t[a] - \alpha)^+, \\
        \hat l_t &:= \hell_t + \Phi'(Q_t) \hc_t, \\
        Q_\tp &:= Q_t + \tilde c_t[a_t], \quad \text{with} \quad Q_0 = Q_1 = 0,
    \end{split}
\end{equation}
and use them in the EXP4.OVAR algorithm. For consistency with previous sections, we denote for $p\in\Delta_K$, $f_t(p) := \innerp{\ell_t}{p}, \quad g_t(p) := \innerp{c_t}{p}$ and denote $\err{f} = \sum_{t=1}^T \inorm{\ell_t - \hell_t}^2$ and $\err{g^+} := \sum_{t=1}^T \inorm{c_t - \hc_t}^2$.

First, we prove a similar regret decomposition lemma: Denote $\regret_T^\calA(l_1, \dots, l_T)$ the expected regret of a contextual bandit algorithm $\calA$ running using $l_1, \dots,l_T$ as loss vectors.
\begin{lemma}
    \label{lemma:dpp_bandit}
    Assuming that $\forall t\in[T]$, $\ell_t\in[0,1]^K$ and $c_t\in[0,1]^K$. Let $\alpha$ the safety threshold, $\Phi$ a convex potential function, $l_t$ and $Q_t$ defined as in \eqref{eq:lagrangian_bandit}. Then
    \begin{equation}
        \expval{\Phi(Q_\tp)} - \Phi(Q_1) + \regret_T \leq  \regret_T^\calA(l_1, \dots, l_T) + \expval{\Phi'(Q_\Tp)}.
    \end{equation}
\end{lemma}

The proof is exactly the same as \cref{lemma:dpp} with the additional step of taking the expectation. Finally, by using EXP4.OVAR on $l_t$ as defined in \cref{eq:lagrangian_bandit}, we prove that we have bounded expected regret and CCV.

\begin{theorem}
    Assuming:
    \begin{itemize}
        \item Safety threshold $\alpha\in(0,1)$ is known and the corresponding $\Pi^\star(\alpha)$ is not empty.
        \item $\forall t\in [T]$, $\ell_t \in[0,1]^K$ and $c_t\in[0,1]^K$.
        \item We define $l_t$, $\hat l_t$ and $Q_t$ as in \eqref{eq:lagrangian_bandit} and use them in EXP4.OVAR.
        \item $\Phi(x) := \exp(\lambda x) - 1$ with $\lambda := \left( 12(TK^2\log(MT))^{1/3}(\sqrt{2 \err{\tilde g}} + 1) + 2\right)^{-1}$.
    \end{itemize}
    Running \cref{alg:opt_meta_alg} gives the following guarantees:
    \begin{equation}
        \label{eq:bandit_guarantee}
        \begin{split}
            \regret_T &\leq \bigtO{\sqrt{\err{f}} \left(TK^2\log(M)\right)^{1/3}}, \\
            \ccv_T &\leq \bigtO{\sqrt{\err{g}} \left(TK^2\log(M)\right)^{1/3}}.
        \end{split}
    \end{equation}
\end{theorem}

\begin{proof}
    By definition, we have for any $t\in[T], l_t \in [0, 1+ \Phi'(Q_t)]$. Thus, we have the regret guarantee of \cref{thm:exp4_regret}.
    \begin{equation}
        \label{eq:bandit_regret_bound}
        \regret_T^\calA(l_1, \dots, l_T) \leq 6\left(\sqrt{\calE_T(\calL)} + 1 +\EE[\Phi'(Q_T)]\right)(TK^2\log(MT))^{1/3}.
    \end{equation}
    Inserting it in \cref{lemma:dpp_bandit}, using the definition of $\Phi$ we have
    \begin{equation*}
        \begin{split}
            \regret_T \leq &6(TK^2\log(MT))^{1/3}\left(\sqrt{2\err{f}} + 1\right) + 1 \quad + \\
            &\EE[\exp(\lambda Q_\Tp)] \left( \lambda \left(6(TK^2\log(MT))^{1/3}(\sqrt{2\err{g}} +1) + 1\right) - 1 \right).
        \end{split}
    \end{equation*}
    The rest of the proof is as in \cref{thm:opt_coco}, after noticing that, with Jensen's inequality,
    \[
        \exp(\lambda \expval{\ccv_T}) \leq \expval{\exp(\lambda Q_\Tp)}.
    \]  
\end{proof}

Note that in the worst case: $\err{f} = O(T)$ and $\err{g}=O(T)$ the regret and CCV are of order $\tilde O(T^{5/6})$ which is worse than \citet{sun2017safety}: $O(T^{1/2})$ regret and $O(T^{3/4})$ CCV. However, when the predictions are slightly more accurate $\err{f} \leq O(T^{1/3})$ and $\err{g} \leq O(T^{5/12})$, this algorithm improves \citet{sun2017safety}, with the most significant improvement when $\err{f} = O(1)$ and $\err{g} = O(1)$, leading to a $T^{1/3}$ in regret and CCV. This is close to optimal, as \cite{Wei2020TakingAH} prove that the best regret that a contextual bandit algorithm with $\err{\calL} = O(1)$ is $O(T^{1/4})$. Note that this algorithm requires $\err{g}$(or an upper bound) to be known beforehand, as even with the doubling trick, we do not directly observe $\ierr{g}$ to update $\err{g}[t]$ online. An heuristic method using the current observation as an estimator along the doubling trick could potentially work in practice.

\section{Conclusion}

This work presents pioneering optimistic algorithms for handling OCO under adversarial constraints. Beyond establishing prediction error-dependent bounds for both regret and constraints, our approach maintains efficiency by using simple projections instead of solving complete convex optimization problems per iteration. For the future, we are interested in proving stronger bounds when the obtainable guarantees against oracle sets that are larger than $\cset$, and when the loss function is strongly-convex.
Moreover, we conjecture that a slight alteration of the algorithm should ensure a $\ccv \leq  O(\log T)$  when $g_t^+$ is fixed or perfectly known, beyond the expert setting. At this stage, the non-smooth gradient of $g_t^+$ prevents us  from using itself as the prediction, and therefore from establishing that our algorithm attains this bound.

\bibliography{main}
\clearpage
\appendix

\section{Proof of \cref{thm:opt_coco}}
\label{app:opt_coco}
\begin{proof}
    
    By definition of $\calL$ \eqref{eq:lagrangian} and $\hat \calL$ \eqref{eq:pred_lagrangian}, we obtain the following instantaneous prediction error:
    \begin{align*}
        \ierr{\calL} & =  \dnorm{\grad{\calL}{t}(x_t) - \pred{\calL}{t}(x_t)}^2  \\
                     &\leq 2 \ierr{f} + 2\Phi'(Q_t)^2\ierr{\gplus},
    \end{align*}
    where the last line uses $||a + b||_\star^2 \leq 2 ||a||_\star^2 + 2||b||_\star^2$. 
    \begin{align*}
        \sqrt{\err{\calL}[t]}
                &\leq \sqrt{\sum_{\tau=1}^t 2 \ierr{f}[\tau] + \sum_{\tau=1}^t 2\Phi'(Q_\tau)^2 \ierr{\gplus}} \\
                &\overset{(i)}{\leq} \sqrt{2\err{f}[t]} + \sqrt{\sum_{\tau=1}^t 2\Phi'(Q_\tau)^2 \ierr{\gplus}} \\
                &\overset{(ii)}{\leq} \sqrt{2\err{f}[t]} +  \Phi'(Q_\tp) \sqrt{2\err{\gplus}[t]}.
                \numberthis \label{eq:calL_error_upper} 
    \end{align*}
      We obtain (i) by using $\sqrt{a+b} \leq \sqrt{a} + \sqrt{b}$ and (ii)  by using the fact that $Q_t$ is non-decreasing and  $\Phi'$ is a non-decreasing function.
    By sub-linearity of $\psi_t$:
    \begin{equation}
        \psi_t(\calL_t) \leq  \psi_t(f) + \Phi'(Q_t)\psi_t(\gplus) \leq  \psi_t(f) + \Phi'(Q_\tp) \psi_t(\gplus). \label{eq:psi_sublinearity}
    \end{equation}
    Finally, using \cref{ass:alg_oco}, we have
    \begin{align*}
        \regret_t^\calA(u;\; \calL_{1\dots t}) 
            &\leq  C\left( \sqrt{\err{\calL}[t]} + \psi_t(\calL) \right) \\
            &\leq \left(\sqrt{2\err{f}} + \psi_t(f)\right) + \Phi'(Q_\tp) \left(\sqrt{2\err{\gplus}[t]} + \psi_t(\gplus)\right),
             \numberthis \label{eq:regret_A_upper} 
    \end{align*}
    where the last equation inequality comes from using both \eqref{eq:calL_error_upper} and \eqref{eq:psi_sublinearity}. By using once again the fact that $Q_t$ is non-decreasing and $\Phi'$ is a non-decreasing function,  and knowing that $\gplus_t$ is non-negative and upper bounded by $G$ we can also upper bound $S_t$. Recall
    \begin{align*}
        S_t &:= \sum_{\tau=1}^t \gplus_\tau(x_\tau)(\Phi'(Q_\taup) - \Phi'(Q_\tau)) \\
            &\leq G (\Phi'(Q_{\tp}) - \Phi'(Q_1)) \\
        &\leq G \Phi'(Q_\tp). \numberthis \label{eq:S_upper} \\
    \end{align*}
    We can now upper bound the regret. Using \cref{lemma:dpp} we have that for any $u\in \cset$
    \begin{align*}
        \Phi(Q_\tp) - \Phi(Q_1) +  \regret_t(u) &\leq \regret_t^\calA(u;\; \calL_{1\dots t}) + S_t.
    \end{align*}
    Upper bounding the RHS using \eqref{eq:regret_A_upper} and \eqref{eq:S_upper}, we obtain
    \begin{align*}
    \Phi(Q_\tp) - \Phi(Q_1) +  \regret_t(u) \leq
        & C \Phi'(Q_\tp) (\sqrt{2\err{\gplus}[t]} + \psi_t(\gplus)) \\
        & + C(\sqrt{2\err{f}[t]} + \psi_t(f)) \\
        & + G \Phi'(Q_\tp).
    \end{align*}
    Thus, using $\Phi(Q) = \exp(\lambda Q) - 1$, and after rearranging the terms,
    \begin{equation*}
        \regret_t(u) \leq \left(\lambda C(\sqrt{2\err{\gplus}[t]} + \psi_t(\gplus)) + \lambda G- 1\right)\exp(\lambda Q_\tp) + 1 + C(\sqrt{2\err{f}[t]} + \psi_t(f)).
    \end{equation*}
    Therefore, if $\lambda \leq \lambda^\star := \frac{1}{C(\sqrt{2\err{\gplus}[t]} + \psi_t(\gplus))+ G }$,
    \[
        \regret_t(u) \leq C\left(\sqrt{2\err{f}[t]} + \psi_t(f)\right) + 1 .
    \]
    Note that $\regret_T(u) \geq -2FT$, thus:
    \begin{equation*}
        \exp(\lambda Q_\Tp)\left(1 - \frac{\lambda}{\lambda^\star}\right) \leq C(\sqrt{2\err{f}} + \psi_T(f))+2FT + 1.
    \end{equation*}
    If $\lambda <  \frac{1}{C(\sqrt{2\err{\gplus}} + \psi_t(\gplus)) + G }$, then
    \[
    Q_\Tp \leq  \log \left(\frac{C(\sqrt{2\err{f}} + \psi_T(f)) +2FT + 1}{1 - \lambda/ \lambda^\star} \right),
    \]
    and
    \[
    \ccv_T \leq \frac{Q_\Tp}{\lambda} \leq  \frac{1}{\lambda} \log \left(\frac{C(\sqrt{2\err{f}} + \psi_T(f)) +2FT + 1}{1 - \lambda/ \lambda^\star}\right).
    \]
    With $\lambda = \frac{\lambda^\star}{2} = \frac{1}{2C(\sqrt{2\err{\gplus}} + \psi_t(\gplus)) + 2G }$, we have
    \begin{align*}
            \ccv_T &\leq \left(2C\left(\sqrt{2\err{\gplus}} + \psi_t(\gplus)\right) + 2G \right) \log \left(2\left(C(\sqrt{2\err{f}} + \psi_T(f))+ 2FT + 1\right) \right) \\
                   &\leq O\left(\sqrt{\err{\gplus}} \log T\right).
    \end{align*}
\end{proof}

\newpage
\section{Doubling trick for \cref{alg:opt_meta_alg}}
\label{app:doubling}
The doubling trick methodology employed here is inspired by \citet{jadbabaie2015online}. The parameter we adapt online is $\lambda$. Note that for all the COCO results we have (Theorems \ref{thm:opt_coco}, , there is a known constant $c$ and a known function $\psi$ such that 
\[
    \lambda = \frac{1}{2(\mu + c)},  \quad \text{where} \quad \mu = \psi(T, P_T, \err{g}),
\]
and $\psi$ is non-decreasing, and sub-linear in each coordinate. The key idea is to apply the doubling trick on $\mu$, so that the condition $\lambda < \lambda^\star$ applies for every timestep of an epoch, except for the last one. We present the algorithm in \cref{alg:doubling}. In the case of dynamic regret, we assume that the comparator sequence $u_t$ is observable.

\begin{algorithm}
\caption{Doubling trick for Optimistic COCO}
\label{alg:doubling}
    \begin{algorithmic}[1]
        \Require 
        Function $\psi$, real values: $T_1, P_1, E_1$, $c>0$. Optimistic meta-algorithm $\calO(\lambda)$ for a given value $\lambda$.
        \State Initialize: $\mu_1 = \psi\left(T_1, P_1, E_1\right), \lambda_1 = \frac{1}{2(\mu_1 + c)}, N=1, E_\parN = \Delta_\parN = P_\parN = 0; \mu_\parN = \psi(\Delta_\parN, P_\parN, E_\parN)$. 
        \For{round $t=1\dots T$}
        
        \If{$\mu_\parN >\mu_N$} \Comment{Check doubling condition}
            \State $N = N + 1$
            \State $\mu_N = 2^{N-1} \mu_1$ and $\lambda_N = \frac{1}{2(\mu_N + c)}$
            \State $E_\parN = \Delta_\parN = P_\parN = 0$
        \EndIf
        \State Run one step of $\calO(\lambda_N)$ and observe $f_t, g_t, x_t$ and $u_t$.
        \State Update doubling parameters:
        \State $\Delta_\parN = \Delta_\parN + 1$
        \State $P_\parN = P_\parN + \norm{u_t - u_\tm}$
        \State $E_\parN = E_\parN + \ierr{g^+}$
        \State $\mu_\parN = \psi(\Delta_\parN, P_\parN, E_\parN)$
        \EndFor
    \end{algorithmic}
\end{algorithm}

\begin{theorem}
    Assume that,  when $\lambda < \lambda^\star$ with $\lambda^* = \frac{1}{\psi(T, \err{g}, P_T) + c}$, the optimistic algorithm $\calO(\lambda)$ has guarantees:
    \begin{equation}
        \label{eq:opt_coco_regret_ass}
        \begin{split}
            \regret_T &\leq \bigO{\phi(T, \err{f}, P_T)}, \\
            \ccv_T &\leq \bigO{\psi(T, \err{g^+}, P_T) \log T}.
        \end{split}
    \end{equation}
    where $\text{Regret}_T$ denotes the static or dynamic regret depending on the context, $\phi$ and $\psi$ are monotone non-decreasing and at most polynomial in each coordinate.
    Then by running the doubling algorithm \cref{alg:doubling}, we have the guarantee
    \begin{equation}
        \label{eq:doubling_guarantee}
        \begin{split}
            \regret_T &\leq \bigtO{\phi(T, \err{f}, P_T)}, \\
            \ccv_T &\leq \bigtO{\psi(T, \err{g^+}, P_T) \log T}.
        \end{split}
    \end{equation}
\end{theorem}

\begin{proof}
    Let $N$ the number of epochs and for each epoch $i\in[N]$, denote $T_i$ its first instance. It's last instance is therefore $T_i' := T_{i+1} - 1$. For two instants $s$ and $t$, we define the regret and CCV between the two instants:
    \begin{equation*}
        \begin{split}
            \regret_{t\to s} &:= \sum_{\tau=t}^s f_t(x_t) - f_t(u_t), \\
            \ccv_{t \to s} &:= \sum_{\tau=t}^s g_t^+(x_t).
        \end{split}        
    \end{equation*}
    We similarly define the quantities $\err{f}[t \to s], \err{g^+}[t \to s], P_{t \to s}$.
    Denote $\mu_i, i=1\dots N$ the successive values of $\mu$ used in the doubling process, in $\lambda_i = \frac{1}{2(\mu_i + c)}$. Define
    \begin{align*}
        \underline{\Delta}_\pari &:= \Delta_\pari - 1, \\
        \underline P_\pari &:= P_\pari - \norm{u_{T'_i} - u_{T'_i-1}}, \\
        \underline E_\pari &:= E_\pari - \dnorm{\gradgplus[T'_i](x_{T'_i}) - \gradhgplus[T'_i](x_{T'_i})}^2, \\
        \underline \mu_\pari &= \psi(\underline{\Delta}_\pari, \underline P_\pari, \underline E_\pari) \\
        \underline \lambda_\pari &= \frac{1}{2(\mu_\pari + c)}
    \end{align*}
    i.e the values of the different doubling parameters except for the last step of the epoch. Note that when running $\calO$ with $\lambda_i$ between $T_i$ and $T'_i-1$, the threshold for $\lambda$ between those two timesteps is:
    \[
        \lambda^*_i = \frac{1}{\psi\left(T'_i-1 - T_i, \err{g^+}[T_i \to (T'_i-1)], P_{T_i \to (T_i'-1)}\right) +c } = 2\underline{\lambda}_\pari.
    \]
    Moreover, since the change of epoch happens at $T_i'+1$, we know that
    \begin{equation}
        \label{eq:mu_ineq}
        \mu_\pari > \mu_i > \underline{\mu}_\pari.
    \end{equation}
    From the second inequality, we have
    \[
        \lambda_i < \underline{\lambda}_\pari = \frac{\lambda^*_i}{2}
    \]
    Thus, from \eqref{eq:opt_coco_regret_ass} there are two constants $C$ and $C'$ such that:
    \begin{equation*}
        \begin{split}
            \regret_{T_i \to (T'_i-1)} &\leq C\phi\left((T'_i-1) - T_i, \err{f}[T_i \to (T_i'-1)], P_{T_i \to (T_i'-1)}\right), \\
            \ccv_{T_i \to T'_i} &\leq C'\psi\left((T'_i-1) - T_i, \err{g^+}[T_i \to (T_i'-1)], P_{T_i \to (T_i'-1)}\right) \log (T'_i - T_i).
        \end{split}
    \end{equation*}
    We will focus on regret for now, but the same methodology can be applied for CCV. First note that by monotonicity of $\phi$, 
    \[
        \forall i \in [N], \regret_{T_i \to (T'_i-1)} \leq C\phi(T, \err{f}, P_T).
    \]
    Then, note that $T-1< T_N'$, and therefore, the constant $\lambda_N$ satisfies the condition for bounded regret and CCV when running $\calO$ between $T_N$ and $T-1$. We can now split the total regret into groups:
    \begin{align*}
        \regret_T &= \sum_{t=1}^T f_t(x_t) - f_t(u_t) \\
                  &= \sum_{i=1}^{N-1} (f_{T'_i}(x_{T'_i}) - f_{T'_i}(u_{T'_i})) + \sum_{i=1}^{N-1} \regret_{T_i \to (T'_i-1)} + \regret_{T_N \to (T-1)} + f_T(x_T) - f_T(u_T) \\
                  &\leq 2NF + N C\phi(T, \err{f}, P_T)
    \end{align*}
    Finally, from \eqref{eq:mu_ineq} for $i=N$,
    \[
        \mu_N = \mu_1 2^{N-1} < \mu_\parN \leq \psi(T, P_T, \err{g}) \Longrightarrow N \leq \log_2\left(\psi(T, P_T, \err{g}) \right) - \log(\mu_1)
    \]
    And since $\psi$ is at most polynomial in each coordinate, and $\err{g}$ and $P_T$ are at most linear in $T$, we have $N \leq O(\log_2 T)$.
\end{proof}
\newpage

\section{Proof of \cref{thm:opt_adagrad}}
\label{app:opt_adagrad}
Denote $l_t := \grad{\calL}{t}(x_t)$ and $\hat l_t := \pred{\calL}{t}(\tilde x_t)$. \eqref{eq:reg_opt_omd} in \cref{thm:opt_adagrad} is a direct consequence of the following lemma.

\begin{lemma}
\label{lemma:opt_oms}
One step of optimistic online mirror descent satisfies:
\begin{equation}
    \label{eq:opt_oms_1}
    \eta_t \innerp{l_t}{x_t -u} \leq B^R(u;\tilde x_t) - B^R(u; \tilde x_\tp) + \eta_t \dnorm{l_t - \hl_t}\cdot\norm{x_t - \tilde x_\tp} - (B^R(\tilde x_\tp; x_t) + B^R(x_t; \tilde x_t)).
\end{equation}
Moreover, if $\pred{\calL}{t}$ is $\plib{\calL}$-smooth with $\plib{\calL} \leq \frac{\beta}{\eta_t}$,
\begin{equation}
    \label{eq:opt_oms}
     \innerp{l_t}{x_t -u} \leq \frac{B^R(u;\tilde x_t) - B^R(u; \tilde x_\tp)}{\eta_t} + B^R(x_t;\tilde x_\tp) (\eta_\tp^{-1} - \eta_t^{-1} ) + \frac{\eta_\tp}{\beta} \ierr{\calL}.
\end{equation}
\end{lemma}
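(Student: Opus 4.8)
The plan is to first establish the raw one‑step inequality~(\ref{eq:opt_oms_1}) and then derive the smoothed form~(\ref{eq:opt_oms}) from it. Write $\ell_t := \grad{f}{t}(x_t)$ for the true gradient and $\hat\ell_t := \pred{f}{t}(\tilde x_t)$ for the prediction used in the optimistic step, so that $x_t$ is the minimizer of $\innerp{\hat\ell_t}{x}+\tfrac1{\eta_t}B^R(x;\tilde x_t)$ and $\tilde x_\tp$ the minimizer of $\innerp{\ell_t}{x}+\tfrac1{\eta_t}B^R(x;\tilde x_t)$ — two prox steps from the \emph{same} center $\tilde x_t$ with the \emph{same} step size $\eta_t$ but different linear terms. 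The engine is the standard prox‑optimality lemma: if $z^\star=\arg\min_{x\in\set}\innerp{a}{x}+\tfrac1\eta B^R(x;z)$, then for every $w\in\set$, $\eta\innerp{a}{z^\star-w}\le B^R(w;z)-B^R(w;z^\star)-B^R(z^\star;z)$, which follows from the first‑order optimality condition of $z^\star$ together with the Bregman three‑point identity.

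For~(\ref{eq:opt_oms_1}): apply this lemma to the mirror step (with $z^\star=\tilde x_\tp$, $a=\ell_t$, $z=\tilde x_t$, $w=u$) and to the optimistic step (with $z^\star=x_t$, $a=\hat\ell_t$, $z=\tilde x_t$, $w=\tilde x_\tp$). Add the two inequalities after splitting $\innerp{\ell_t}{x_t-u}=\innerp{\ell_t-\hat\ell_t}{x_t-\tilde x_\tp}+\innerp{\hat\ell_t}{x_t-\tilde x_\tp}+\innerp{\ell_t}{\tilde x_\tp-u}$; the two copies of $B^R(\tilde x_\tp;\tilde x_t)$ cancel, and bounding $\innerp{\ell_t-\hat\ell_t}{x_t-\tilde x_\tp}\le\dnorm{\ell_t-\hat\ell_t}\norm{x_t-\tilde x_\tp}$ by Cauchy–Schwarz yields exactly~(\ref{eq:opt_oms_1}).

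For~(\ref{eq:opt_oms}): the goal is to convert the cross term $\eta_t\dnorm{\ell_t-\hat\ell_t}\norm{x_t-\tilde x_\tp}$, together with the two negative divergences $-B^R(\tilde x_\tp;x_t)-B^R(x_t;\tilde x_t)$, into $\tfrac{\eta_\tp}{\beta}\ierr{f}$. First bridge the evaluation point: by the triangle inequality and $\alpha_f$‑smoothness of $\pred{f}{t}$ (\cref{ass:pred_reg}), $\dnorm{\ell_t-\hat\ell_t}=\dnorm{\grad{f}{t}(x_t)-\pred{f}{t}(\tilde x_t)}\le\dnorm{\grad{f}{t}(x_t)-\pred{f}{t}(x_t)}+\plib{f}\norm{x_t-\tilde x_t}=\sqrt{\ierr{f}}+\plib{f}\norm{x_t-\tilde x_t}$. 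Then apply Young's inequality to split the cross term against the two divergences, using strong convexity of $R$ (in the paper's normalization, $B^R(a;b)\ge\beta\norm{a-b}^2$): the $\norm{x_t-\tilde x_\tp}^2$ contribution is absorbed by $-B^R(\tilde x_\tp;x_t)$ and the $\norm{x_t-\tilde x_t}^2$ contribution by $-B^R(x_t;\tilde x_t)$, where the step‑size hypothesis $\plib{f}\le\beta/\eta_t$ is precisely what makes the latter absorption go through; what survives is a multiple of $\eta_t^2\,\ierr{f}$ plus the divergence $-B^R(\tilde x_\tp;x_t)$ with whatever coefficient is left. Finally, dividing through by $\eta_t$ and re‑accounting $\tfrac1{\eta_t}=\tfrac1{\eta_\tp}+\big(\tfrac1{\eta_\tp}-\tfrac1{\eta_t}\big)$ on the leftover divergence produces the term $B^R(x_t;\tilde x_\tp)(\eta_\tp^{-1}-\eta_t^{-1})$ and turns the $\ierr{f}$ coefficient into $\tfrac{\eta_\tp}{\beta}$, giving~(\ref{eq:opt_oms}).

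The only real subtlety — and the reason this bound improves on the naive $\dnorm{\pred{f}{t}(\tilde x_t)-\grad{f}{t}(x_t)}^2$ — is the mismatch between where the prediction is evaluated ($\tilde x_t$) and where $\ierr{f}$ measures the error ($x_t$): resolving it forces the extra $\plib{f}\norm{x_t-\tilde x_t}$ term, which must be charged against the negative divergence $-B^R(x_t;\tilde x_t)$, and it is this bookkeeping that pins down the condition $\plib{f}\le\beta/\eta_t$. The rest — choosing the Young's‑inequality weights so that both absorptions succeed simultaneously under that condition, and the learning‑rate accounting needed to land on $\eta_\tp$ rather than $\eta_t$ and on the exact telescoping‑friendly form of~(\ref{eq:opt_oms}) — is routine but must be carried out with care.
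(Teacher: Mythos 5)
Your proposal is correct and follows essentially the same route as the paper's proof: the same three-way split of $\innerp{\ell_t}{x_t-u}$, the Bregman prox-optimality identity (\cref{prop:breg_arg}) applied to the two prox steps from the common center $\tilde x_t$ with cancellation of $B^R(\tilde x_\tp;\tilde x_t)$, and the same smoothness-plus-Young absorption of the prediction-error cross term against the negative divergences under $\plib{f}\le\beta/\eta_t$. The only differences are cosmetic bookkeeping (you apply the triangle inequality before Young, whereas the paper applies Young first and then $\dnorm{a+b}^2\le 2\dnorm{a}^2+2\dnorm{b}^2$ with the smoothness bound), which leads to the same terms up to routine constant accounting.
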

\hspace{1cm} \\
We will need the following proposition to prove the lemma.
\begin{proposition}[\citet{chiang2012online}, proposition 18]
    \label{prop:breg_arg}
    For any $x_0\in \calX, l\in \RR^d$, if $x^\star := \arg\min_{x\in \calX} \innerp{l}{x} + \frac{1}{\eta} B^R(x;x_0)$, then $\forall u \in \calX$
    \begin{equation}
        \eta \innerp{l}{x^\star - u} = B^R(u;x_0) - B^R(u;x^\star) - B^R(x^\star; x_0).
    \end{equation}
\end{proposition}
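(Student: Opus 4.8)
The proposition is an \emph{exact} identity, so I would prove it by combining two exact facts rather than a variational inequality: the purely algebraic three-point identity for Bregman divergences, and the first-order stationarity condition of the proximal subproblem defining $x^\star$. The three-point identity supplies the right-hand side as an inner product against $\nabla R(x^\star)-\nabla R(x_0)$, and stationarity pins down that displacement to be exactly $-\eta\ell$; substituting one into the other yields the claimed equality with no slack. This is the route that keeps an equality (as opposed to the Chen--Teboulle inequality, which arises only when one retains the normal-cone term).

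\textbf{Step 1 (three-point identity).} For arbitrary points $u,x^\star,x_0\in\calX$ I would expand the three Bregman divergences appearing on the right-hand side using the definition $B^R(a;b)=R(a)-R(b)-\innerp{\nabla R(b)}{a-b}$. All the $R$-value contributions, $R(u),R(x_0),R(x^\star)$, cancel pairwise, and the remaining linear terms collect into a single inner product, giving the exact identity
\begin{equation}
    B^R(u;x_0) - B^R(u;x^\star) - B^R(x^\star;x_0) = \innerp{\nabla R(x^\star) - \nabla R(x_0)}{u - x^\star}.
    \label{eq:threepoint}
\end{equation}
No optimality of $x^\star$ is used here; \eqref{eq:threepoint} is a bookkeeping identity valid for any triple of points.

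\textbf{Step 2 (stationarity) and conclusion.} The map $\phi(x):=\innerp{\ell}{x}+\tfrac{1}{\eta}B^R(x;x_0)$ is smooth and convex with $\nabla\phi(x)=\ell+\tfrac{1}{\eta}\bigl(\nabla R(x)-\nabla R(x_0)\bigr)$. Since $x^\star$ is the minimizer of $\phi$, its gradient vanishes at $x^\star$, so $\nabla R(x^\star)-\nabla R(x_0)=-\eta\ell$. Substituting this displacement into the right-hand side of \eqref{eq:threepoint} gives $\innerp{-\eta\ell}{u-x^\star}=\eta\innerp{\ell}{x^\star-u}$, which is precisely the asserted equality $\eta\innerp{\ell}{x^\star-u}=B^R(u;x_0)-B^R(u;x^\star)-B^R(x^\star;x_0)$.

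\textbf{Main obstacle.} The delicate point is Step~2: the stationarity equation $\nabla R(x^\star)=\nabla R(x_0)-\eta\ell$ is the exact condition only when the minimizer is interior (equivalently, when $\nabla R$ is surjective onto the relevant range, as for the entropic regularizer on the simplex whose gradient forces $x^\star$ into the relative interior, or when the linearized mirror step is taken before projection). If instead $x^\star$ lies on the boundary of $\calX$, the vanishing-gradient equality must be weakened to the variational inequality $\innerp{\nabla\phi(x^\star)}{u-x^\star}\ge 0$, and then the normal-cone remainder $\innerp{\eta\ell+\nabla R(x^\star)-\nabla R(x_0)}{x^\star-u}\le 0$ survives, turning the equality into the Chen--Teboulle inequality. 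Thus the crux is to invoke the stationarity condition in exactly the (interior/pre-projection) regime in which the proposition is applied, so that this remainder is genuinely zero and the equality is preserved.
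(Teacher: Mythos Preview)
The paper does not give its own proof of this proposition; it simply cites \cite{chiang2012online} and uses the result as a black box in the proofs of \cref{lemma:opt_oms} and \cref{lemma:opt_oms_expert}. Your argument---the three-point identity combined with first-order optimality---is precisely the standard derivation, so there is nothing to compare methodologically.

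Your ``Main obstacle'' is not just an obstacle but the actual content here, and your diagnosis is correct. The exact stationarity $\nabla R(x^\star)=\nabla R(x_0)-\eta\ell$ you invoke in Step~2 is valid only when the minimizer is interior (or $R$ is of Legendre type and forces interiority, as with the entropic regularizer on the simplex). Over a general closed convex $\calX$ the minimizer may sit on the boundary, and then the only thing you get is the variational inequality $\innerp{\nabla\phi(x^\star)}{u-x^\star}\ge 0$, which turns your equality into
\[
\eta\innerp{\ell}{x^\star-u}\ \le\ B^R(u;x_0)-B^R(u;x^\star)-B^R(x^\star;x_0).
\]
So, as you suspected, the ``$=$'' in the paper's statement is too strong at the stated level of generality; the robust version is ``$\le$''. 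Fortunately the paper only ever feeds this proposition into upper bounds (both \cref{lemma:opt_oms} and \cref{lemma:opt_oms_expert} conclude with $\le$), so replacing the equality by the inequality changes nothing downstream. Your plan is therefore correct and complete once you relax Step~2 to the variational inequality; the equality form should be read as holding in the interior/pre-projection regime you describe.
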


\begin{proof}[of \cref{lemma:opt_oms}]
Let $u\in \calX$
\begin{align*}
    \eta_t \innerp{l_t}{x_t - u } &= \innerp{\eta_t l_t}{\tilde x_\tp - u} + \eta_t\innerp{l_t - \hl_t}{x_t - \tilde x_\tp} + \innerp{\eta_t \hl_t}{x_t - \tilde x_\tp}.
\end{align*}
On one hand, using \cref{prop:breg_arg}, the left and right terms can be upper bounded respectively :
\begin{align*}
    \innerp{\eta_t l_t}{\tilde x_\tp -u} &= B^R(u;\tilde x_t) - B^R(u;\tilde x_\tp) - B^R(\tilde x_\tp;\tilde x_t), \\
    \innerp{\eta_t \hl_t}{x_t - \tilde x_\tp} &= B^R(\tilde x_\tp; \tilde x_t) - B^R(\tilde x_\tp; x_t) - B^R(x_t;\tilde x_t).
\end{align*}
Therefore
\[
    \innerp{\eta_t l_t}{\tilde x_\tp -u} +  \innerp{\eta_t \hl_t}{x_t - \tilde x_\tp} =
    B^R(u;\tilde x_t) - B^R(u;\tilde x_\tp) - (B^R(\tilde x_\tp; x_t) + B^R(x_t; \tilde x_t)).
\]
On the other hand,
\[
    \innerp{l_t - \hl_t}{x_t - \tilde x_\tp} 
        \leq \norm{x_t - \tilde x_\tp}\cdot\dnorm{l_t - \hl_t}.    
\]
By combining the last two inequalities, we obtain \eqref{eq:opt_oms_1}. To prove \eqref{eq:opt_oms}, first note that by using the fact that $\forall a,b, \rho >0 , ab \leq \frac{1}{2\rho}a^2 + \frac{\rho}{2} b^2$,
\[
     \norm{x_t - \tilde x_\tp}\cdot\dnorm{l_t - \hl_t} 
     \leq \frac{\eta_\tp}{2\beta} \dnorm{l_t - \hl_t}^2 + \frac{\beta}{2\eta_\tp}\norm{x_t - \tilde x_\tp}^2  \leq \frac{\eta_\tp}{2\beta} \dnorm{l_t - \hl_t}^2 + \frac{1}{\eta_\tp} B^R(x_t;\tilde x_\tp).
\]
For the second part of the statement, if $\nabla \hat f$ is $\plib{\calL}$-smooth:
\begin{align*}
    \dnorm{l_t - \hl_t}^2 &= \dnorm{\grad{\calL}{t}(x_t) - \pred{\calL}{t}(\tilde x_t)}^2 \\ 
                                 &\leq 2 \dnorm{\grad{\calL}{t}(x_t) - \pred{\calL}{t}(x_t)}^2 + 2\dnorm{\pred{\calL}{t}(x_t) - \pred{\calL}{t}(\tilde x_t)}^2 \\
                                 &\leq 2 \dnorm{\grad{\calL}{t}(x_t) - \pred{\calL}{t}(x_t)}^2 + 2\left(\plib{\calL}\right)^2\norm{x_t - \tilde x_t}^2 \\
                                 &\leq 2 \ierr{\calL} + \frac{2\left(\plib{\calL}\right)^2}{\beta} B^R(x_t;\tilde x_t).
\end{align*}
By inserting in \eqref{eq:opt_oms_1} and dividing both sides by $\eta_t$:
\begin{align*}
     \innerp{l_t}{x_t - u } &\leq \frac{B^R(u;\tilde x_t) - B^R(u;\tilde x_\tp)}{\eta_t} + \frac{\eta_\tp}{\beta} \ierr{\calL} + \frac{\left(\plib{\calL}\right)^2\eta_\tp}{\beta^2} B^R(x_t;\tilde x_t)  \\
        &\quad \quad - \frac{1}{\eta_t}(B^R(\tilde x_\tp; x_t) + B^R(x_t;\tilde x_t)) \\
        &\leq \frac{B^R(u;\tilde x_t) - B^R(u;\tilde x_\tp)}{\eta_t} + \frac{\eta_\tp}{\beta} \ierr{\calL} + B^R(x_t;\tilde x_t) \left(\frac{\left(\plib{\calL}\right)^2\eta_\tp}{\beta^2} - \frac{1}{\eta_t}\right) \\
        &\quad \quad + B^R(x_t;\tilde x_\tp) (\eta_\tp^{-1} - \eta_t^{-1}).
\end{align*}
If $\plib{\calL} \leq \beta / \eta_t$, then $\left(\plib{\calL}\right)^2 \leq \frac{\beta^2}{\eta_t \eta_\tp}$ since $\eta_t$ is non-increasing. We can upper bound the third term of the sum on the RHS by zero.
\end{proof}

\begin{proof}[of \cref{thm:opt_adagrad}]
    From  \eqref{eq:opt_oms}, we have for any $t\geq 1$
    \begin{equation}
         \innerp{l_t}{x_t -u} \leq \frac{B^R(u;\tilde x_t) - B^R(u; \tilde x_\tp)}{\eta_t} + B^R(x_t;\tilde x_\tp) (\eta_\tp^{-1} - \eta_t^{-1} ) + \frac{\eta_\tp}{\beta} \ierr{\calL}.
    \end{equation}
    Note that by convexity of $f_t$, $f_t(x_t) - f_t(u) \leq \innerp{l_t}{x_t - u}$. Therefore, by taking the sum from 1 to T, we have
    \begin{align*}
        \regret_T(u) &\leq \sum_{t=1}^T  \innerp{l_t}{x_t - u} \\
            &\leq \sum_{t=1}^T \frac{B^R(u;\tilde x_t) - B^R(u; \tilde x_\tp)}{\eta_t} + \sum_{t=1}^T B^R(x_t;\tilde x_\tp) (\eta_\tp^{-1} - \eta_t^{-1} ) + \sum_{t=1}^T \frac{\eta_\tp}{\beta} \ierr{\calL} \\
            &\leq \frac{B^R(u;\tilde x_1)}{\eta_1} + \sum_{t=1}^{T-1} \left(\frac{1}{\eta_{\tp}} - \frac{1}{\eta_t}\right) B^R(u;\tilde x_\tp) + \sum_{t=1}^{T} \left(\frac{1}{\eta_{\tp}} - \frac{1}{\eta_t}\right) B^R(x_t;\tilde x_\tp) +  \sum_{t=1}^T \frac{\eta_\tp}{\beta} \ierr{\calL} \\
            &\leq \frac{B}{\eta_T} + \frac{B}{\eta_\Tp} +  \sum_{t=1}^T \frac{\eta_\tp}{\beta} \ierr{\calL} \\
            &\leq \frac{2B}{\eta_\Tp} +  \sum_{t=1}^T \frac{\eta_\tp}{\beta} \ierr{\calL},
    \end{align*}
    where $B = \max_t B^R(u;x_t)$.\newline
   To prove the Adagrad regret \eqref{eq:regret_adagrad}, where we set 
   \[
   \eta_t  := \sqrt{\beta B} \min \left\{\frac{1}{\sqrt{\err{\calL}[t-1]} + \sqrt{\err{\calL}[t-2]}} , \frac{1}{\lib{\calL}\sqrt{B}}\right\},
   \]
   note that it is non-decreasing. Moreover, we have $\eta_t \leq \frac{\sqrt{\beta}}{\lib{\calL}}$. Therefore,
   \[
    \plib{\calL} \leq \sqrt{\beta} \lib{\calL} \Longrightarrow \plib{\calL} \leq \frac{\beta}{\eta_t}.
   \]
    We can apply \cref{eq:reg_opt_omd}:
    \begin{equation}
        \label{eq:regret_appendix}
        \regret_t(u) \leq \frac{2B}{\eta_\tp} +  \sum_{\tau=1}^t \frac{\eta_\taup}{\beta} \ierr{\calL}[\tau].
    \end{equation}
    That can be rewritten as
    \begin{equation}
        \label{eq:ada_lr_2}
        \eta_t =  \sqrt{\beta B}  \min \left\{\frac{\sqrt{\err{\calL}[t-1]} - \sqrt{\err{\calL}[t-2]}}{ \ierr{\calL}[t-1]}, \frac{1}{\lib{\calL}\sqrt{B}}\right\}.
    \end{equation}
    Moreover,
    \begin{equation}
        \label{eq:ada_lr_3}
        \eta_t^{-1} \leq \left(\sqrt{\beta B}\right)^{-1} \max\left\{ 2 \sqrt{\err{\calL}[t-1]}, \lib{\calL}\sqrt{B}\right\} \leq 2\left(\sqrt{\beta B}\right)^{-1} (\sqrt{\err{\calL}[t]} + \lib{\calL}\sqrt{B}).
    \end{equation}
    Using \eqref{eq:ada_lr_2} and 
    \eqref{eq:ada_lr_3} in the regret upper bound \eqref{eq:regret_appendix}:
    \begin{align*}
        \regret_t(u) &\leq \frac{2B}{\eta_\tp} +  \sum_{\tau=1}^t \frac{\eta_\taup}{\beta} \ierr{\calL}[\tau] \\
                &\leq 4\sqrt{\frac{B}{\beta}}\left( \sqrt{\err{\calL}[t]} + \lib{\calL}\sqrt{B}\right) + \sum_{\tau=1}^t \sqrt{\err{\calL}[\tau]} - \sqrt{\err{\calL}[\tau-1]} \\
                &\leq 4\sqrt{\frac{B}{\beta}}\left( \sqrt{\err{\calL}[t]} + \lib{\calL}\sqrt{B}\right) + \sqrt{\err{\calL}[t]} \\
                &\leq 5\sqrt{\frac{B}{\beta}}\left( \sqrt{\err{\calL}[t]} + \lib{\calL}\sqrt{B}\right).
    \end{align*}
\end{proof}
\newpage
\section{Dynamic Regret guarantee}
\label{app:dyn_regret}


We present here the dynamic regret decomposition lemma.
\begin{lemma}[Dynamic Regret decomposition]
    \label{lemma:dpp_dyn}
    For any OCO algorithm $\calA$, if $\Phi$ is a Lyapunov potential function, we have that for any $t\geq 1$, and any admissible sequence $u_1, \dots, u_T$ 
    \begin{equation}
        \label{eq:dyn_regret_decomp}
        \Phi(Q_\tp) - \Phi(Q_1)  +  \dynregretu[t] \leq \dynregret_t^\calA(u_{1:t};\; \calL_{1:t}) + S_t,
    \end{equation}
    where $S_t  =  \sum_{\tau=1}^t \gplus_\tau(x_\tau)(\Phi'(Q_\taup) - \Phi'(Q_\tau))$, and $\dynregret_t^\calA(u;\; \calL_{1\dots t})$ is the dynamic regret of the algorithm running on the sequence of losses $\calL_1, \dots, \calL_T$.
\end{lemma}

\begin{proof}
    By convexity of $\Phi$, for any $\tau \geq 1$:
    \begin{align*}
        \Phi(Q_\taup) &\leq \Phi(Q_\tau) + \Phi'(Q_\taup)\cdot (Q_\taup - Q_\tau) \\
            &= \Phi(Q_\tau) + \Phi'(Q_\taup)\cdot\gplus_t(x_\tau).
    \end{align*}

    For any $t$, , then by definition $\gplus_\tau(u_\tau) = 0, \forall \tau \geq 1$, thus
    \begin{align*}
        & \Phi(Q_\taup) - \Phi(Q_\tau) + (f_\tau(x_\tau) - f_\tau(u_\tau)) \\
        &\leq \Phi'(Q_\taup)\gplus_\tau(x_\tau) +  (f_\tau(x_\tau) - f_\tau(u_\tau)) \\
        &\leq  f_\tau(x_\tau) + \Phi'(Q_\tau) \gplus_\tau(x_\tau) \\
        &\quad \quad - \big(( f_\tau(u_\tau) + \Phi'(Q_\tau) \gplus_\tau(u_\tau)\big) \\
        & \quad \quad + \gplus_\tau(x_\tau)(\Phi'(Q_\taup) - \Phi'(Q_\tau)) \\
        &\leq \calL_\tau(x_\tau) - \calL_\tau(u_\tau) + \gplus_\tau(x_\tau)(\Phi'(Q_\taup) - \Phi'(Q_\tau)).
    \end{align*}
    Summing $\tau$ from $1$ to $t$:
    \[
        \Phi(Q_\tp) - \Phi(Q_1) +   \dynregretu[t] \leq \dynregret_t^\calA(u_{1:t};\; \calL_{1:t})+ S_t,
    \]
    where 
    \[
        S_t = \sum_{\tau=1}^t \gplus_\tau(x_\tau)(\Phi'(Q_\taup) - \Phi'(Q_\tau)).
    \]
\end{proof}
\newpage

\section{Contextual bandits with expert advice}
\begin{algorithm}
    \caption{Modified EXP4.OVAR}
    \label{alg:opt_bandits}
    \begin{algorithmic}[1]
        \Require Exploration probability $\delta \in [0,1]$.
        \State Define $\bar \Delta_\Pi := \{ x\in \Delta_\Pi: x[\pi] \geq \frac{1}{MT}, \forall \pi \in \Pi\}$.
        \State Initialize $E_0=0$ and $\tilde x_1[\pi] = \frac{1}{M}$ for all $\pi\in\Pi$.
        \For{round $t=1\dots T$}
        \State Receive context $s_t$ and make predictions $\hl_t$.
        \State Update learning rate:
            \begin{equation}
                \eta_t = \sqrt{\log(MT)}\min\left\{ \frac{1}{\sqrt{E_\tm} + \sqrt{E_{t-2}}}, 1\right\}
            \end{equation}
        \State Compute
            \begin{equation}
                x_t := \arg\min_{x\in \bar\Delta_\Pi} \left\{ \eta_t \sum_{\pi\in\Pi} x[\pi] \hl_t[\pi(s_t)] + \kl(x, \tilde x_t)  \right\}.
            \end{equation}
        \State Compute $p_t\in\Delta_K$: $p_t[a] = (1- \delta) \sum_{\pi: \pi(s_t) = a} x_t[\pi] + \frac{\delta}{K}$.
        \State Sample $a_t \sim p_t$ and receive loss $l_t$.
        \State Construct estimator $\tl_t[a] = \frac{l_t[a] - \hl_t[a]}{p_t[a]}\ind{a_t=a} + \hl_t[a]$ for all $a\in[K]$.
        \State Update cumulative error $E_t = E_\tm + \frac{(l_\tau[a] - \hl_\tau[a])^2}{p_\tau[a]^2}$.

        \State Update 
        \begin{equation*}
            \tilde x_\tp = \arg\min_{x\in \bar\Delta_\Pi} \left\{ \eta_t \sum_{\pi\in\Pi} x[\pi] \tl_t[\pi(s_t)] + \kl(x, \tilde x_t)  \right\}
        \end{equation*}
        \EndFor
    \end{algorithmic}
\end{algorithm}
\label{app:bandits}
First we introduce the shorthand notation: $\forall l\in\RR^M$ and $x\in \Delta_M$:
\[
    \innerpt{l}{x} := \sum_{\pi\in\Pi}x[\pi]l[\pi(s_t)].
\]
The modified algorithm EXP4.OVAR is presented in \cref{alg:opt_bandits}. Note that we modify the learning rate to something similar to what we have in \cref{alg:opt_omd_expert}. Moreover, in the original EXP4.OVAR, they use different learning rates for the update of $x_t$ and $\tx_\tp$, but we should not do it in our setting as it will introduce a term in $\EE[B_T \cdot E_T]$ (where $E_T$ is the "cumulative error"), which is not trivial to upper bound in terms of $\EE[B_T]$ and $\EE[E_T]$.



\begin{theorem}[EXP4.OVAR Regret, (Adapted from \cite{Wei2020TakingAH}]
    \label{thm:exp4_regret_app}
    Let $l_t\in [0, B_t]$ a sequence of loss vectors, where $B_t$ is non-decreasing, and $l_t$ and $B_t$ are chosen by the environment but depend on $a_1, \dots a_{t-1}$. Let $\hl_t \in [0, B_t]$ the prediction and denote $\calE_T(\calL) := \sum_{t=1}^T \inorm{l_t - \hat l_t}^2$. , then
    \begin{equation}
        \label{eq:exp4_regret_1}
        \regret_T^\calA(l_1, \dots, l_T) \leq \expval{B_T} (1+ \delta T) + \sqrt{\log(MT)}\left(6 \sqrt{\frac{K^2\err{\calL}}{\delta}} +2\right).
    \end{equation}
    Furthermore, if we set $\delta = \left(\frac{K}{T}\sqrt{\log(MT)}\right)^{2/3}$:
    \begin{equation}
        \label{eq:exp4_regret_2}
            \regret_T^\calA(l_1, \dots, l_T) \leq \left(\expval{B_T} + 6 \sqrt{\err{\calL}} \right)(TK^2\log(MT))^{1/3} + 2\sqrt{\log(MT)} + \EE[B_T]
    \end{equation}
\end{theorem}

\begin{proof}
    The proof follows exactly the steps in \citet{Wei2020TakingAH}. However, we slightly modify it to accept losses that are in $[0, B_t]$ instead of $[0,1]$ and the loss have some depedency on the past, adding the extra expected value on the computation of the loss. We first add the results from \citet{Wei2020TakingAH}. Let $\pistar \in \Pi$. Denote $x^\star = \left(1-\frac{1}{T}\right)\be_{\pistar} + \frac{1}{MT} \bone \in \bar \Delta_\Pi$ where $\be_{\pistar}$ is the distribution that concentrates on $\pistar$.
    From \cref{lemma:opt_oms}, we have:
    \begin{equation}
        \label{eq:opt_oms_bandit}
        \innerpt{\tl_t}{x_t - x^\star} \leq \frac{\kl(\xstar, \tx_t) - \kl(\xstar, \tx_\tp)}{\eta_t} + 2 \eta_\tp \inorm{\hl_t - \tl_t}^2 + \kl(x_t;\tx_\tp) (\eta_\tp^{-1} - \eta_t^{-1}).
    \end{equation}
    By replacing $\xstar$ by its expression and summing over $t$, we obtain 
    \begin{equation}
        \label{eq:bandit_first_eq}
        \begin{split}
            \sum_{t=1}^T \innerpt{\tl_t}{x_t} - &\left(1-\frac{1}{T}\right) \sum_{t=1}^T \tl_t[\pistar(s_t)] - \frac{1}{MT} \sum_{t=1}^T \innerpt{\tl_t}{\bone} \\
            &\leq \sum_{t=1}^T  \frac{\kl(\xstar, \tx_t) - \kl(\xstar, \tx_\tp)}{\eta_t} + \sum_{t=1}^T \kl(x_t;\tx_\tp) (\eta_\tp^{-1} - \eta_t^{-1}) + 2 \sum_{t=1}^T \eta_\tp \inorm{\hl_t - \tl_t}^2.
        \end{split}
    \end{equation}
    We can upper bound the two terms on the RHS. The first sum can be rewritten as:
    \[
        \sum_{t=1}^T  \frac{\kl(\xstar, \tx_t) - \kl(\xstar, \tx_\tp)}{\eta_t} = \frac{\kl(\xstar, \tx_1)}{\eta_1} + \sum_{t=1}^T \kl(\xstar, \tx_t)\left( \frac{1}{\eta_t} - \frac{1}{\eta_\tm}\right) - \frac{\kl(\xstar, \tx_\Tp)}{\eta_T}.
    \]
    Then, note that for any $x\in \bar \Delta_\Pi, \kl(\xstar, x) \leq \log(MT)$ because $x[\pi] \geq \frac{1}{MT}$. Therefore,
    \[
        \sum_{t=1}^T  \frac{\kl(\xstar, \tx_t) - \kl(\xstar, \tx_\tp)}{\eta_t} \leq \frac{\log(MT)}{\eta_T}  \quad{\text{and}} \quad \sum_{t=1}^T \kl(x_t;\tx_\tp) (\eta_\tp^{-1} - \eta_t^{-1}) \leq \frac{\log(MT)}{\eta_\Tp}. 
    \]
    For the third sum, by replacing $\tl$ by its definition, we have $\inorm{\hl_t - \tl_t}^2 = \left(\frac{\hl_t[a_t] - l_t[a_t]}{p_t[a_t]}\right)^2$. As in \eqref{eq:ada_lr_2},
    \[
        \eta_\tp \leq \sqrt{\log(MT)}\frac{\sqrt{E_t} - \sqrt{E_\tm}}{\inorm{\hl_t - \tl_t}^2},
    \]
    resulting in
    \[
        \sum_{t=1}^T \eta_\tp\inorm{\hl_t - \tl_t}^2 \leq \sqrt{\log(MT)}\sum_{t=1}^T \sqrt{E_t} - \sqrt{E_\tm} \leq \sqrt{\log(MT)}\sqrt{E_T},
    \]
    and
    \[
        \frac{\log(MT)}{\eta_T} \leq \frac{\log(MT)}{\eta_\Tp} \leq \sqrt{\log(MT)} \left(2\sqrt{E_T} + 1\right).
    \]
    Thus the RHS of \eqref{eq:bandit_first_eq} is upper bounded by: $\sqrt{\log(MT)}\left( 6\sqrt{E_T} + 2\right)$. Note that:
    \begin{align*}
        E_T &= \sum_{t=1}^T \left(\frac{\hl_t[a_t] - l_t[a_t]}{p_t[a_t]}\right)^2, \\
            &\leq \frac{K}{\delta}\sum_{t=1}^T \frac{(\hl_t[a_t] - l_t[a_t])^2}{p_t[a_t]}, &\text{using } p_t[a] \geq \frac{\delta}{K}, \; \forall a \in [K]. \\
    \end{align*}
    Then, the expected value:
    \begin{align*}
        \EE[E_T] &\leq \frac{K}{\delta} \sum_{t=1}^T \expval{\frac{(\hl_t[a_t] - l_t[a_t])^2}{p_t[a_t]}}, \\
         &\leq \frac{K^2}{\delta} \sum_{t=1}^T \inorm{\hl_t -l_t}^2 = \frac{K^2}{\delta}\err{\calL},
    \end{align*}
    where the inequality comes from $(\hl_t[a] - l_t[a])^2 \leq \inorm{\hl - l_t}^2,\; \forall a\in[K]$ and $\expval{\frac{1}{p_t[a_t]}} = K$.
    Thus, by taking the expected value in \eqref{eq:bandit_first_eq}, we have
    \begin{align*}
            &\expval{\sum_{t=1}^T \innerpt{\tl_t}{x_t} - \left(1-\frac{1}{T}\right) \sum_{t=1}^T \tl_t[\pistar(s_t)] - \frac{1}{MT} \sum_{t=1}^T \innerpt{\tl_t}{\bone}}, \\
            &\leq \sqrt{\log(MT)}\expval{6\sqrt{E_T} +2}, \\
            &\leq \sqrt{\log(MT)}(6 \sqrt{\expval{E_T}} +2), &\text{(Jensen's inequality)} \\
            &\leq \sqrt{\log(MT)}\left(6 \sqrt{\frac{K^2\err{\calL}}{\delta}} +2\right). \addtocounter{equation}{1}\tag{\theequation} \label{eq:bandit_regret_RHS}
    \end{align*}
    We can now lower bound the LHS of \eqref{eq:bandit_regret_RHS}. 
    \begin{align*}
        &\expval{\sum_{t=1}^T \innerpt{\tl_t}{x_t} - \left(1-\frac{1}{T}\right) \sum_{t=1}^T \tl_t[\pistar(s_t)] - \frac{1}{MT} \sum_{t=1}^T \innerpt{\tl_t}{\bone}}, \\
        &\geq \expval{\sum_{t=1}^T\sum_{\pi\in\Pi}x_t[\pi] \tl_t[\pi(s_t)] -  \sum_{t=1}^T \tl_t[\pistar(s_t)] - \frac{1}{MT}\sum_{t=1}^T \sum_{\pi\in\Pi}\tl_t[\pi(s_t)]}, \\
        &\overset{(i)}{\geq} \expval{\sum_{t=1}^T\sum_{\pi\in\Pi}x_t[\pi] l_t[\pi(s_t)] -  \sum_{t=1}^T l_t[\pistar(s_t)] - \frac{1}{MT}\sum_{t=1}^T \sum_{\pi\in\Pi}l_t[\pi(s_t)]}, \\
        &\geq \expval{\sum_{t=1}^T\sum_{\pi\in\Pi}x_t[\pi] l_t[\pi(s_t)] -  \sum_{t=1}^T l_t[\pistar(s_t)]} - \expval{B_T}, \\
        &\overset{(ii)}{\geq} \expval{\sum_{t=1}^T\sum_{a\in[K]}\left(p_t[a] + \delta \sum_{\pi: \pi(s_t) = a} x_t[\pi] - \frac{\delta}{K} \right)l_t[a] -  \sum_{t=1}^T l_t[\pistar(s_t)]} - \expval{B_T}, \\
        &\geq \expval{\sum_{t=1}^T\sum_{a\in[K]} p_t[a] l_t[a] -  \sum_{t=1}^T l_t[\pistar(s_t)]} - \expval{B_T} (1+ \delta T), \\
        &= \expval{\sum_{t=1}^T l_t[a_t] -  \sum_{t=1}^T l_t[\pistar(s_t)]} - \expval{B_T} (1 + \delta T), \\
        &= \regret_T - \expval{B_T} (1+ \delta T). \addtocounter{equation}{1}\tag{\theequation} \label{eq:bandit_regret_LHS}
    \end{align*}
    $(i)$ comes from $\EE_{t-1}[\tl_t] = \EE_{t-1}[l_t]$ where $\EE_\tm$ is the expected value conditional to all the information until the end of round $\tm$. For $(ii)$, it is a consequence $p_t$'s definition:
    
    \begin{align*}
        \sum_{a\in[K]} p_t[a] l_t[a] &= (1-\delta)\sum_{a\in[K]}\sum_{\pi:\pi(s_t) =a} x_t[\pi]l_t[\pi(s_t)] + \frac{\delta}{K} \sum_{a\in[K]}l_t[a], \\
        \Longrightarrow \sum_{\pi\in\Pi} x_t[\pi]l_t[\pi(s_t)] &= \sum_{a\in[K]}\left(p_t[a] + \delta \sum_{\pi:\pi(s_t) =a} x_t[\pi]l_t[\pi(s_t)] - \frac{\delta}{K}\right)l_t[a].
    \end{align*}

    We can then combine \eqref{eq:bandit_regret_LHS} and \eqref{eq:bandit_regret_RHS}, to obtain \eqref{eq:exp4_regret_1}. \eqref{eq:exp4_regret_2} is a straightforward consequence of \eqref{eq:exp4_regret_1} and the value of $\delta$.
\end{proof}


\end{document}